%
\documentclass[runningheads]{llncs}
\usepackage{graphicx}
\usepackage{tikz}
\usepackage{comment}
\usepackage{amsmath,amssymb} 
\usepackage{color}

\usepackage{booktabs}
\usepackage{algorithm}
\usepackage{algorithmic}
\usepackage{multirow}
\usepackage{pifont}
\usepackage{color}
\usepackage{threeparttable}
\usepackage{url}
\usepackage{xspace}
\usepackage{hyperref}

\usepackage[accsupp]{axessibility}  
%

\makeatletter
\DeclareRobustCommand\onedot{\futurelet\@let@token\@onedot}
\def\@onedot{\ifx\@let@token.\else.\null\fi\xspace}

\def\eg{\emph{e.g}\onedot} 
\def\ie{\emph{i.e}\onedot}

\def\etal{\emph{et al}\onedot}
\makeatother

\DeclareMathOperator\supp{supp}

\newcommand\norm[1]{\left\lVert#1\right\rVert}
\newtheorem{prop}{Proposition}

\begin{document}
\title{Depth-Aware Image Compositing Model for Parallax Camera Motion Blur}
\titlerunning{Image Compositing Model for Parallax Motion Blur}
\author{German F. Torres \and
Joni Kämäräinen
}
\authorrunning{G. Torres and J. Kämäräinen}

\institute{Tampere University, Finland \\
\email{\{german.torresvanegas, joni.kamarainen\}@tuni.fi} \\
\url{https://github.com/germanftv/ParallaxICB}
}
\maketitle              
\begin{abstract}
Camera motion introduces spatially varying blur due to the depth changes in the 3D world. 
This work investigates scene configurations where such blur is produced under parallax camera motion.
We present a simple, yet accurate, Image Compositing Blur (ICB) model for depth-dependent spatially varying blur.
The (forward) model produces realistic motion blur from a single image, depth map, and camera trajectory.
Furthermore, we utilize the ICB model, combined with a coordinate-based MLP, 
to learn a sharp neural representation from the blurred input.
Experimental results are reported for synthetic and real examples.
The results verify that the ICB forward model is computationally efficient and produces realistic blur,
despite the lack of occlusion information. 
Additionally, our method for restoring a sharp representation proves to be a competitive approach for the deblurring task.

\keywords{blur formation, image compositing blur, neural representations, deblurring}
\end{abstract}
\section{Introduction}
\label{sec:intro}
\begin{figure}[t]
    \centering
    \begin{tabular}{c}
         \includegraphics[width=\linewidth]{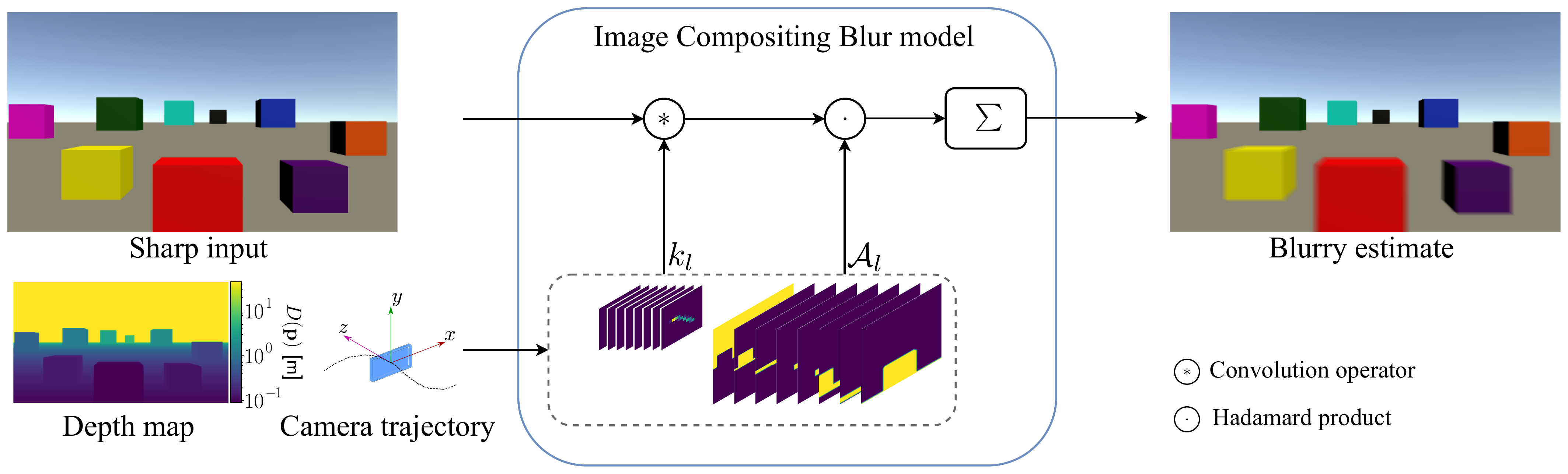}\\
    \end{tabular}
    \caption{
    Our blur formation model accurately describes parallax motion blur. By providing the depth and camera trajectory, we can fit a set of motion kernels $k_l$ and alpha-matte terms $\mathcal{A}_l$, which are used to blend the blur from different layers.
    }
    \label{fig:teaser}
\end{figure}
Motion blur is a common problem in photography and in certain computer vision tasks such as feature matching~\cite{Mustaniemi_2019_WACV} or object detection~\cite{kupyn_CVPR_2018}. 
In essence, motion blur occurs when either the camera or the scene objects, or both, are in motion during exposure.
Recovering the edges and textures of the latent sharp image, \ie deblurring, remains as an open problem since there
are infinite latent sharp sequences consistent with the generated blur.

In conventional deblurring approaches, there is a model that describes the formation of the blur, coupled with an image prior
that regularizes the solution space of the optimization problem.
A major part of the research has been conducted upon suitable image priors that characterize natural images
~\cite{chan_TIP_1998,krishnan_fergus_NIPS_2009,krishnan_CVPR_2011,xu_CVPR_2013,pan_CVPR_2016,li_ICCV_2019}.
However, the applicability in real scenarios also depends on the accuracy of the assumed blur formation model.
Pioneering works assume that the blur results from the shift-invariant convolution of the sharp
image with an unknown Point Spread Function (PSF)
\cite{fergus_SIGGRAPH_2006,levin_CVPR_2009,levin_CVPR_2011,xu_ECCV_2010}.
For this to be precise, either of the two possible scenarios must hold, apart from the scene being static: 
1) the camera shake only involves in-plane translation while the scene is either planar or sufficiently far from the camera, 
2) the focal length of the camera is large and there is no in-plane rotation~\cite{whyte_IJCV_2012}.
Otherwise, camera shake generally induces non-uniform (spatially varying) blur.

Several deblurring algorithms have been proposed to deal with spatially-varying blur that is produced by more realistic 6D camera motion
\cite{tai_PAMI_2010,gupta_ECCV_2010,hirsch_ICCV_2011,whyte_IJCV_2012,xu_CVPR_2013}.
Nevertheless, these works fail at modeling the induced blur in 3D scenes, especially at depth discontinuities.
With the advances in deep learning, several network architectures have been proposed to handle multiple types of blur by learning from data
~\cite{nah_CVPR_2017,kupyn_ICCV_2019,tao_CVPR_2018,ye_IEEEAccess_2020,Zamir_CVPR_2021_MPRNet,Chen_CVPR_2021_HINet,Tsai_TIP_2022_BANet,Cho_ICCV_2021_MIMO-UNet,Zamir_CVPR_2022_Restormer,Tu_CVPR_2022_MAXIM}. 
They benefit from not requiring an explicit description of the blur formation process.
In such works, a neural network is trained over large-scale datasets to restore the sharp image.
Deep deblurring represents state-of-the-art on multiple benchmarks,
but their performance depends on the type of blur that is present in the training set.

Due to the parallax effect, objects positioned at different depths from the camera produce spatially varying blur, 
when the camera moves during capture.
Following this line, a number of works have incorporated the depth in their deblurring methods as an extra auxiliary input~\cite{pan_WACV_2019,sheng_TCSVT_2019,li_TIP_2020} or by a joint estimation process \cite{xu_ICCP_2012,hu_CVPR_2014,zhou_CVPR_2019}, 
but they do not provide a concrete blur model. 

In this work, we study the impact of the depth variation on the motion blur focusing on {\it parallax camera motion}, \ie when the camera moves parallel to the image plane. 
By analyzing the geometry of this type of camera motion, we identify two realistic scene types where
depth plays a significant role in the produced blur: 
{ 1)} {\it Macro} Photography and 
{ 2)} {\it Trucking} Photography.
For such configurations, we propose a tractable Image Compositing Blur (ICB) model for parallax motion
assuming that the depth and camera trajectory are available. 
This model accurately approximates the camera blur under parallax motion (Fig.~\ref{fig:teaser}).
In addition, we provide evidence that our ICB model, in conjunction with coordinate-based Multi-Layer Perceptron (MLP) models, 
can be used to extract a sharp neural representation from a single blurry image.

In summary, the main contributions are:
\textbf{1)} insight analysis about the scene configurations and capture settings for which depth becomes meaningful in the blur formation; 
\textbf{2)} a simplified, yet accurate enough, Image Compositing Blur (ICB) model for parallax depth-induced camera motion blur;
\textbf{3)} an alternative approach to restore sharp images without the need for training over large datasets; and
\textbf{4)} one synthetic and one real dataset of realistic scenes that include pairs of blurry and sharp images with depth maps and camera trajectories.
\section{Related work}
\paragraph{Blur formation models.}
Blur formation models have been studied in the context of image deblurring.
Arguably, the simplest model assumes uniform behavior over the whole image.
In this case, the blurred image is presumed to be the result of shift-invariant convolution with a Point Spread Function (PSF)~\cite{fergus_SIGGRAPH_2006}. 
However, this model only holds for very limited practical scenarios.

For the more general case of spatially-varying blur, some works are based on the projective motion path of the camera shake~\cite{tai_PAMI_2010}. Gupta~\etal.~\cite{gupta_ECCV_2010} assume that the blur can be accurately modeled by in-plane camera translation and rotation. 
White~\etal.~\cite{whyte_IJCV_2012} focus on the blur produced by 3D rotations. 
Furthermore, Hirsch~\etal~\cite{hirsch_ICCV_2011} model the blur as the linear combination of patch-based blur kernel basis. 
Nevertheless, none of these models precisely determine the blur generation in 3D scenes, especially around abrupt changes in depth
\paragraph{Image deblurring.}
Conventional methods for image deblurring are optimization frameworks that tackle the blur produced by the camera motion. 
To handle the well-known ill-posed nature, previous works enforced different image priors in their solutions, 
such as Total-Variation~\cite{chan_TIP_1998},
normalized sparsity prior \cite{krishnan_CVPR_2011},
$L_0$-norm regularization \cite{xu_CVPR_2013},
dark channel prior \cite{pan_CVPR_2016}, 
or discriminative prior \cite{li_ICCV_2019}. 

With the advances in deep learning, several Convolutional Neural Network (CNN) architectures have been proposed.
These architectures only take the blurred image as input and produce the estimated sharp image.
Su~et~al.~\cite{su_CVPR_2017} used an encoder-decoder architecture for video deblurring.
Nah~et~al.~\cite{nah_CVPR_2017} incorporated the multi-scale processing approach in their deep network.
Following the multi-scale principle, numerous CNN-based methods have been introduced 
including components such as Generative Adversarial Networks (GAN)~\cite{kupyn_CVPR_2018,kupyn_ICCV_2019},
Long-Short Term Memory (LSTM)~\cite{tao_CVPR_2018},
scale-iterative upscaling scheme~\cite{ye_IEEEAccess_2020},
half instance normalization~\cite{Chen_CVPR_2021_HINet},
multi-scale inputs and outputs~\cite{Cho_ICCV_2021_MIMO-UNet},
blur-aware attention~\cite{Tsai_TIP_2022_BANet},
and multi-stage progressive restoration~\cite{Zamir_CVPR_2021_MPRNet}.
More recently, progress on Transformer~\cite{vaswani_nips_2017_transformer} and MLP models demonstrate the ability to handle global-local representations for image restoration tasks~\cite{Zamir_CVPR_2022_Restormer,Tu_CVPR_2022_MAXIM}.

The problem with conventional methods is that they do not use depth in deblurring and they are computationally expensive. 
On the contrary, deep deblurring performance strongly depends on the training data which, 
in the case of the above works, do not contain spatial blur induced by depth variations. 
\paragraph{Depth-aware deblurring.}
The involvement of the depth cue in the motion blur, although not widely studied, it is not new. 
Xu and Jia~\cite{xu_ICCP_2012} proposed the first work on this track. 
They used a stereopsis setup to estimate the depth information and subsequently perform layer-wise
deblurring. 
Optimization-based solutions have been introduced for the joint estimation of the scene depth and sharp image,
employing either expectation-maximization~\cite{hu_CVPR_2014} or energy-minimization~\cite{pan_WACV_2019} methods.
Sheng~\etal.~\cite{sheng_TCSVT_2019} proposed an algorithm that iteratively refines the depth and estimates the latent sharp image from an initial depth map and a blurry image, using belief propagation and Richardson-Lucy algorithm, respectively.
Park and Lee~\cite{park_ICCV_2017} proposed and alternating energy-minimization algorithm for the joint dense-depth reconstruction, camera pose estimation, super-resolution, and deblurring.
However, their method requires an image sequence instead of a single image.

On the deep learning side, Zhou~\etal.~\cite{zhou_CVPR_2019} proposed a stereo deblurring network that internally estimates bi-directional disparity maps to convey information about the spatially-varying blur that is caused by the depth variation.
Moreover, Li~\etal.~\cite{li_TIP_2020} introduced a depth-guided network architecture for single-image deblurring, 
which both refines an initial depth map and restores the sharp image.

The above depth-aware deblurring methods properly acknowledge that depth changes produce spatially-varying blur, but it is not clear in which cases this holds. 
The depth is used as an additional cue for deblurring but, on the other hand, they do not address the spatial blur due to scene depth variation.
In contrast, we first identify practical scenarios where the depth variations certainly yield to non-uniform blur. 
We then characterize how depth and camera motion result in regions with different blur behavior.
\section{Geometry of camera motion blur}
\subsection{Fundamentals}
\paragraph{Projective motion path blur model.}
For static scenes, image blur comes from the motion of the camera during the exposure time. 
More precisely, the captured blurry image $\mathbf{y}$ is the summation of the transient sharp images $\{\mathbf{x}_m\}_{m=1}^M$ seen by the camera in the poses $\{\vartheta_m\}_{m=1}^M$ that follow its trajectory. 
Assuming there is a linear transformation $\mathcal{T}_{\vartheta_m}$ that warps the latent sharp image $\mathbf{x}$ to any transient image $\mathbf{x}_m$, the blurred image $\mathbf{y}$ can be expressed as:
\begin{equation}
\label{eq:geometric_motion_blur}
    \mathbf{y} = \sum_{m=1}^M w_m \mathcal{T}_{\vartheta_m}(\mathbf{x}) + \eta \enspace ,
\end{equation}
where the weight $w_m$ indicates the time the camera stays at pose $\vartheta_m$ and $\eta$ error in the model.
The transformation $\mathcal{T}_{\vartheta_m}$ is induced by a homography $H_m$ such that a pixel $\mathbf{p}$ from the latent image $\mathbf{x}$ is mapped to the pixel $\mathbf{p}'_m$ in the transient image $\mathbf{x}_m$.
In homogeneous coordinates, $[ \mathbf{p}'_m ]_h = H_m [ \mathbf{p}]_h $, where $[\cdot]_h$ denotes the conversion from Cartesian to homogeneous coordinates.

For a camera following a 6D motion trajectory, the homography $H_m$ that relates pixels from the latent image $\mathbf{x}$ to the transient image $\mathbf{x}_m$, which are captured from a planar scene at depth $D$, has the form:
\begin{equation}
\label{eq:gupta_homography}
    H_m = C(R_m + \frac{1}{D}T_m[0,0,1])C^{-1} \enspace ,
\end{equation}
where $R_m$ and $T_m$ stand for the rotation and translation components, and $C$ is the intrinsic camera matrix. 
Eq.~(\ref{eq:gupta_homography}) reveals that there is non-uniform blur caused by the depth-dependence of the translation component, 
as well as when rotations are introduced.
Notwithstanding, the homography model only holds for fronto-parallel scenes since the warping operator would require an estimation of the occluded areas that become visible, particularly at the depth discontinuities.
\paragraph{Pixel-Wise Blur (PWB) model.}
In general, image blur has a spatially-varying nature. To take this into account, the blurred image $\mathbf{y}$ can be modeled via 
convolutions with pixel-wise kernels $\mathbf{k}(\mathbf{p},\mathbf{u})$:
\begin{equation}
\label{eq:pixel-wise_kernel_model}
    \mathbf{y}(\mathbf{p}) = \mathbf{x}(\mathbf{p}) * \mathbf{k}(\mathbf{p},\mathbf{u}) + \eta \enspace ,
\end{equation}
where $*$ denotes to the convolution operator, $\mathbf{p}=(i,j)$ are pixel coordinates and $\mathbf{u}=(u,v)$ the kernel coordinates. 
One can blur an image by computing the Empirical Probability Density Function (EPDF) of pixel displacements $\Delta \mathbf{p}'_m = \mathbf{p}'_m  - \mathbf{p}$. 
This model is used as a baseline in our experiments, and its limitations against the proposed blur formation model are demonstrated.

In the remainder of this section, we take a closer look at the influence of depth in the blur generation for in-plane camera translations. Here, we provide insights of what are the scenarios, and to what extent, the depth should be considered in the deblurring problem.
\begin{figure}[t]
    \centering
    \includegraphics[width=0.6\linewidth]{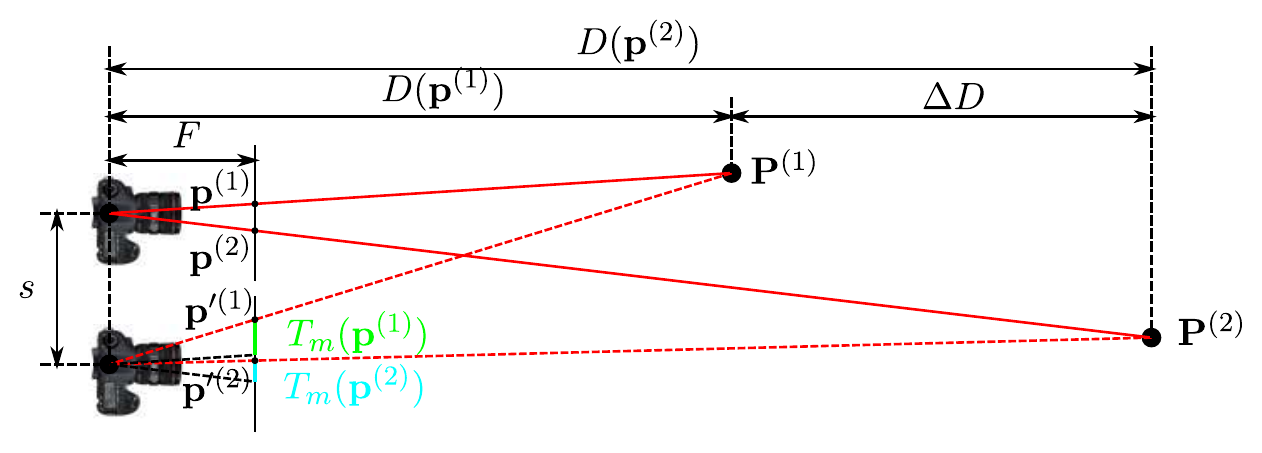}
    
    \caption{Blur induced by camera translation of length $s$ for two 3D points $\mathbf{P}^{(1)}$ and $\mathbf{P}^{(2)}$ with their depth difference of $\Delta D$.}
    \label{fig:inplane_camera_motion_scenario}
\end{figure}
%
\subsection{In-plane camera motion}
\label{sec:inplane_camera_motion_analysis}
\begin{figure*}[t]
    \centering
    \begin{tabular}{cc}
         \includegraphics[width=0.46\linewidth]{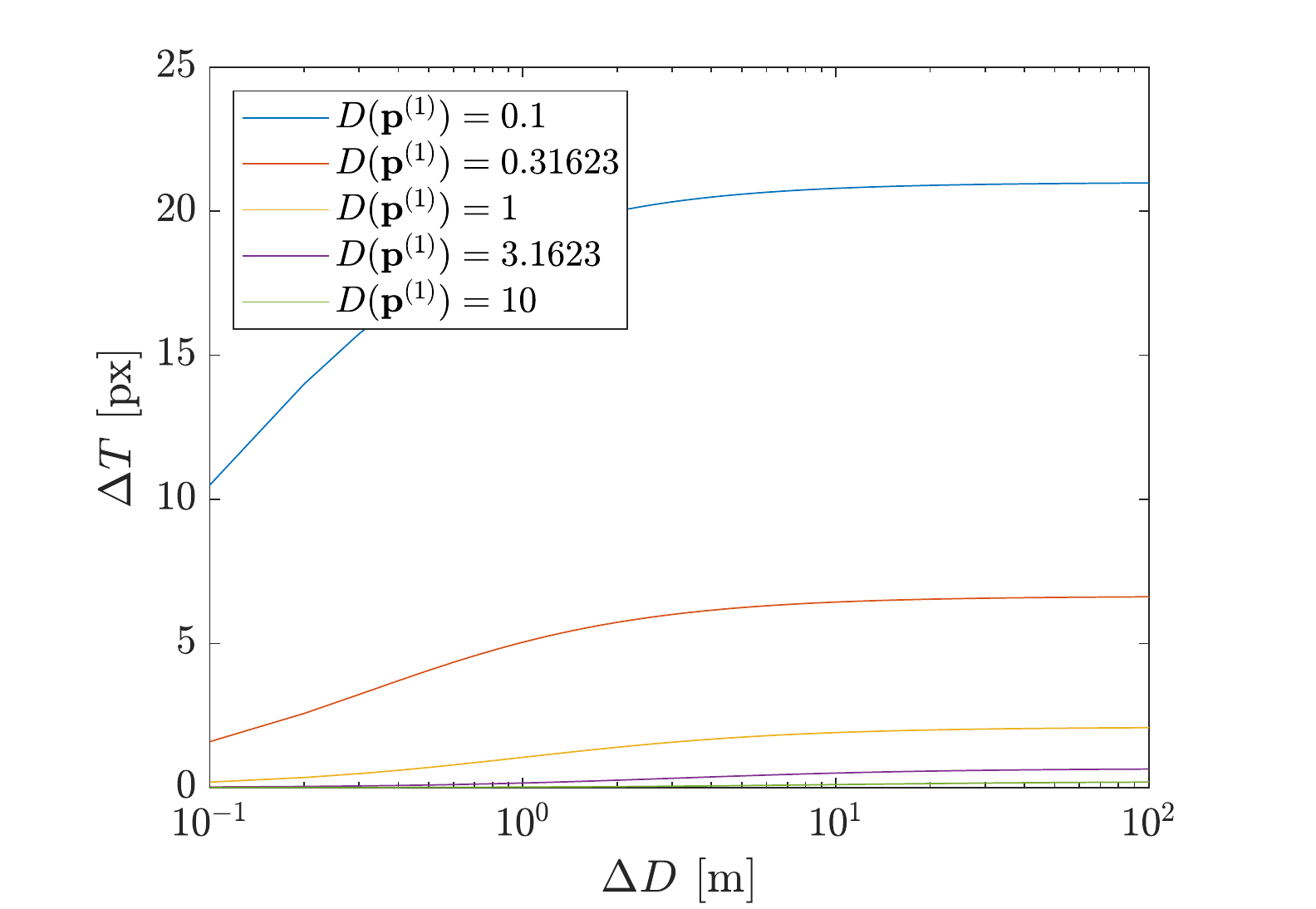}&
         \includegraphics[width=0.46\linewidth]{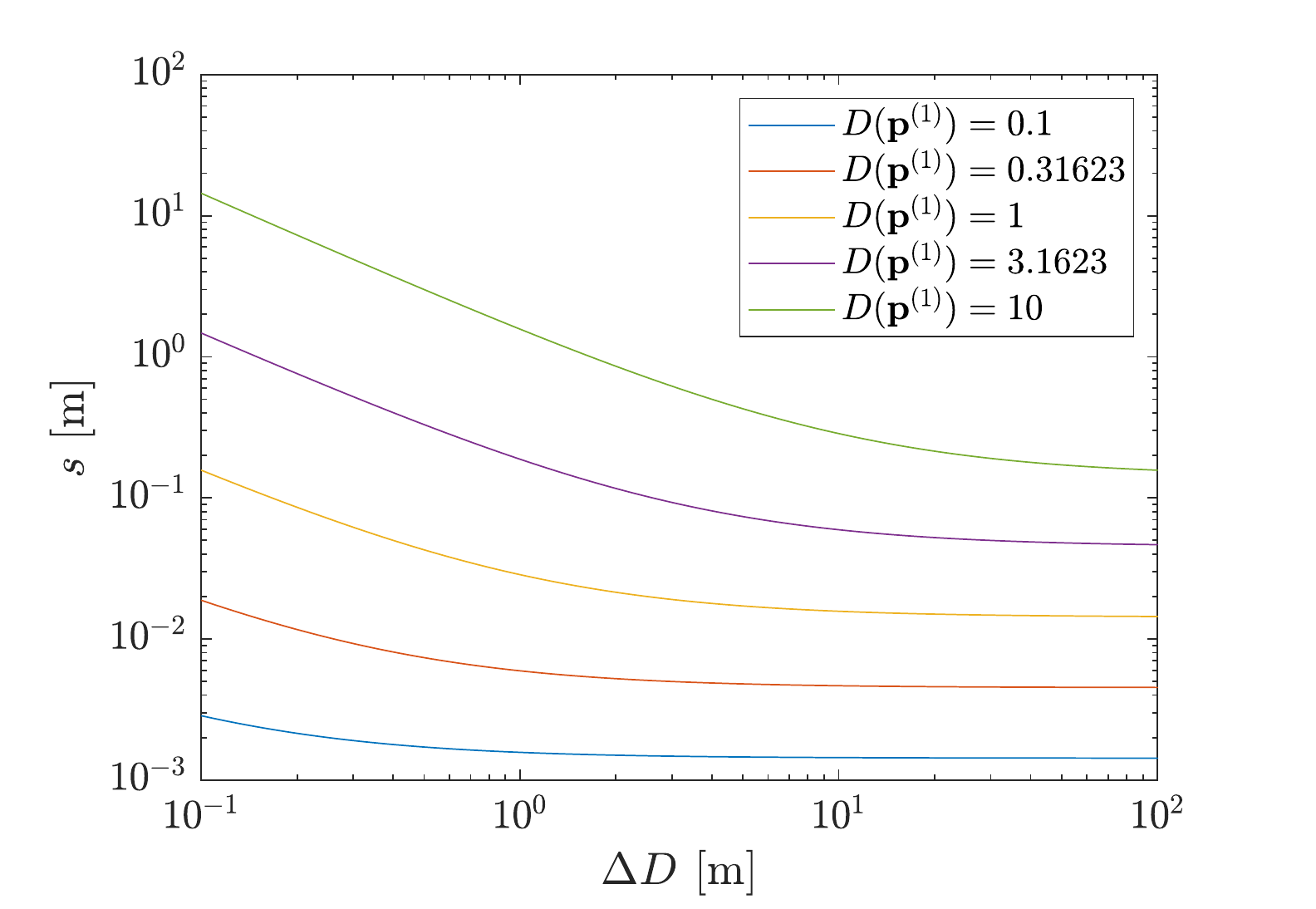} \\
         (a)&(b)
    \end{tabular}
    \caption{\textit{Blur variation} determined by Eq.~\ref{eq:blur_variation}, at different depths of the closest point $D(\mathbf{p}^{(1)})$ (in meters): (a) \textit{blur variation} in pixels as function of the depth difference
    (fixed camera displacement baseline of $s=3[mm]$);
    (b) the camera displacement as a function of the depth difference
    (fixed blur variation $\Delta T = 10$ pixels [px]).
    Camera focal
    length is $F$=2.8[mm] and pixel size 4[$\mu$m] which correspond to settings that can be found in mobile phone cameras (ultrawide lenses).}
    \label{fig:behaviour_blur_variation_inplane_camera_motion}
\end{figure*}
Let us first consider a pin-hole camera with a uniform in-plane motion in the horizontal axis of length $s$ during the exposure time, 
and two trivial 3D points $\mathbf{P}^{(1)}$ and $\mathbf{P}^{(2)}$ such that the former represents the closest point to the camera in the depth direction and the latter is the farthest as depicted in Fig.~\ref{fig:inplane_camera_motion_scenario}. 
On the one hand, $\mathbf{P}^{(1)}$ and $\mathbf{P}^{(2)}$ are respectively mapped to the points $\mathbf{p}^{(1)}$ and $\mathbf{p}^{(2)}$ in the latent image $\mathbf{x}$. 
On the other hand, they are seen, by the camera at pose $\vartheta_m$, on $\mathbf{p}'^{(1)}$ and $\mathbf{p}'^{(2)}$. In this case, the induced homography $H_m(\mathbf{p})$ is given by
\begin{equation}
    H_m(\mathbf{p}) = \begin{bmatrix} 1&0&T_m(\mathbf{p})\\0&1&0\\0&0&1
    \end{bmatrix} \enspace ,
\end{equation}
where $T_m(\mathbf{p})$ is the image plane translation component that is dependent on the pixel depth $D(\mathbf{p})$ as
\begin{equation}
\label{eq:blur_extension_inplane_motion}
    T_m(\mathbf{p})=\frac{sF}{D(\mathbf{p})} \enspace ,
\end{equation}
where $F$ denotes the focal length of the camera. 
Due to the simplicity of the motion, the \textit{blur extent} of an arbitrary 3D point $\mathbf{P}$ in the blurry image $\mathbf{y}$ is given by $T_m(\mathbf{p}) = \mathbf{p}'_x - \mathbf{p}_x$, where $x$ denotes the horizontal component. 
Noteworthy, this is equivalent to the disparity in stereo vision. 
\paragraph{Blur variation.} Since there is a difference in depth $\Delta D = D(\mathbf{p}^{(2)}) - D(\mathbf{p}^{(1)})$,
there must be difference in the blur extent for $\mathbf{P}^{(1)}$ and $\mathbf{P}^{(2)}$, as illustrated in Fig.~\ref{fig:inplane_camera_motion_scenario}. 
Thus, we define the \textit{blur variation} $\Delta T$ as \textit{the difference in blur extent between two points at different depths}. 
Expressively, $\Delta T = T_m(\mathbf{p}^{(1)}) - T_m(\mathbf{p}^{(2)})$. $\Delta T$ measures the non-uniform behavior of the blur caused by the depth and under in-plane camera movements. 
By replacing terms, we get
\begin{equation}
\label{eq:blur_variation}
    \Delta T = \frac{sF}{D(\mathbf{p}^{(1)})\Big[\frac{D(\mathbf{p}^{(1)})}{\Delta D} + 1\Big]} \enspace .
\end{equation}
To gain intuition of the \textit{blur variation} in practical scenarios, we describe its behavior in Fig.\ref{fig:behaviour_blur_variation_inplane_camera_motion} by assuming $F$=2.8[mm] and pixel size of 4[$\mu$m]. 
\paragraph{Macro photography scenes.}
Fig.~\ref{fig:behaviour_blur_variation_inplane_camera_motion}(a) illustrates the \textit{blur variation} $\Delta T$ as a function of the depth difference $\Delta D$, at different depths of the closest point $D(\mathbf{p}^{(1)})$, while keeping a fixed camera displacement $s$=3[mm] (a reasonable choice for natural hand shake). 
It can be seen that whereas the \textit{blur variation} is negligible for far-field scenes no matter what is the depth variation, 
non-uniform blur becomes significant for near-field macro scenes (the closest target $\le$ 0.1m from the camera) even with rather low depth variation ($\ge$ 0.1m). 
Although the \textit{blur variation} increases as the depth difference gets higher, there is an upper bound that is determined by $\Delta T < \frac{sF}{D(\mathbf{p}^{(1)})}$. 
In conclusion, spatially-variant blur is particularly affected by the proximity of the scene whenever there is any variation in depth. 
Consequently, depth plays a significant role for \textit{Macro Photography} scenes.
In this setting, images suffer from defocus blur due to the limited depth-of-field of optics, but defocus blur is a separate issue addressed in other works~\cite{anwar_BMVC_2017,zhang_JVCIR_2016,akpinar_TIP_2021}.
\paragraph{Trucking photography scenes.}
From another perspective, Fig.~\ref{fig:behaviour_blur_variation_inplane_camera_motion}(b) shows the camera displacement $s$ as a function of the depth difference $\Delta D$, 
by assuming a constant blur variation $\Delta T = 10$ pixels, for different depths of the closest point $D(\mathbf{p}^{(1)})$. 
In other words, this plot tells us how much the camera should be moved to produce a \textit{blur variation} of $10$ pixels. 
In this case, it is observed that a few millimeters are sufficient to produce such \textit{blur variation} for near-field scenes, regardless of the depth difference. 
In contrast, in the case of far-field scenes, such a level of blur variation can only be achieved through a camera displacement that ranges from tens of centimeters to a few meters, depending on the depth difference. 
Such intense movement is unlikely to happen in natural hand shake, but appears in cases where the camera is placed on a fast-moving object. 
For example, when capturing pictures from inside a moving car. We dub this as \textit{Trucking Photography} scenes.
\section{Image Compositing Blur (ICB) model}
\label{sec:blur_model}
From Fig.~\ref{fig:behaviour_blur_variation_inplane_camera_motion}(a), we see that there are depth ranges that yield to nearly the same amount of blur.
Hence, pixels in a particular depth range share a common 2D convolutional kernel that characterizes the blur.
Inspired by the defocus blur formation models of Hassinoff~\etal~\cite{hasinoff_ICCV_2007} and Ikoma~\etal~\cite{ikoma_ICCP_2021}, we present a new parallax motion Image Compositing Blur (ICB) model that takes the depth into account:
\begin{equation}
    \mathbf{y} = \sum_{l=0}^{L-1} (\mathbf{x} * k_l)\cdot \mathcal{A}_l + \eta \enspace ,
    \label{eq:image_compositing}
\end{equation}
where $\{\mathcal{A}_l\}_{l=0}^{L-1}$ and $\{k_l\}_{l=0}^{L-1}$ are the set of alpha-matting terms and blur kernels, respectively; and "$\cdot$" is pixel-wise multiplication.
We define each alpha matte as:
\begin{equation}
    \mathcal{A}_l = \frac{\hat{\mathcal{R}}_l \cdot \mathcal{M}_l } {C} \enspace ,
    \label{eq:alpha-matting}
\end{equation}
where $C$ is a normalization constant over the $L$ depth layers (\ie $C := \sum_{l=0}^{L-1} \hat{\mathcal{R}}_l \cdot \mathcal{M}_l $).
$\mathcal{M}_l$ are the z-buffers from far to near layers:
\begin{equation}
    \mathcal{M}_l = \prod_{l'=l+1}^{L-1} (1 - \hat{\mathcal{R}}_l') \enspace .
    \label{eq:cumulative_occlusion}
\end{equation}
$\hat{\mathcal{R}}_l$ is the smooth spatially-extended version of the depth region $\mathcal{R}_l$. $\hat{\mathcal{R}}_l$ is defined as $\hat{\mathcal{R}}_l := (\mathcal{R}_l \oplus \supp{k_l}) * G_{\sigma, \supp{k_l}}$, with $\oplus$ denoting the dilation operator, 
and $G_{\sigma, \supp{k_l}}$ is a Gaussian smoothing window with the standard deviation $\sigma$ and a window size of $\supp{k_l}$. 
$\{\mathcal{R}_l\}_{l=0}^{L-1}$ comes from the discretization of the depth map, 
but dilation and smoothing of $\hat{\mathcal{R}}_l$ are used to approximate the mixed blur around the depth discontinuities, 
and therefore allows to omit explicit estimation of the occluded pixels.
Specifically, $\mathcal{R}_l$ is determined by the scene depth as
\begin{equation}
\label{eq:blur_regions_def}
     \mathcal{R}_l = \begin{cases} 
                        \mathbf{p} \in \Omega |D(\mathbf{p})\geq D_0 &, l=0\\
                        \mathbf{p} \in \Omega | D_{l-1} < D(\mathbf{p}) \leq D_l &, l=1,\dots, L-1 
                    \end{cases} \enspace ,
\end{equation}
where $\Omega$ refers to the pixel domain in the latent image $\mathbf{x}$, 
and $\{D_l\}_{l=0}^{L-1}$ is the sequence of depth values that define the regions with "uniform" blur.
In particular, $D_0$ represents the depth limit value,
the depth values from $D_0$ to $\infty$, for which pixels seem not to move at all.

Next, we derive how to compute the depth sequence $\{D_l\}_{l=0}^{L-1}$ and the respective kernels $k_l$, for the known camera trajectory $s$ and depth map $D(\mathbf{p})$. 

\subsection{Depth-dependent regions}
\label{sec:depth-depentent_regions}
\begin{figure*}[!t]
    \centering
    \includegraphics[width=\linewidth]{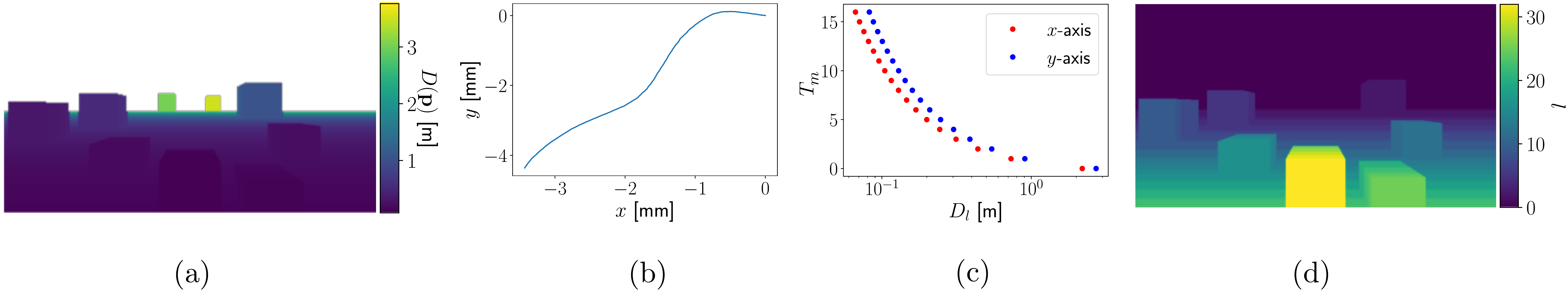}
    \caption{Spatially-varying blur from depth: (a) full depth map of the latent image $\mathbf{x}$, (b) in-plane camera trajectory, (c) depth sequences in the $x$ and $y$ axis that delimit the regions with the same amount of blur (see Eq.~\ref{eq:depth_sequence}) and (d) the region indicators from 0 to 32 that denote the amount of blur from less than 1 to 16 pixels in both dimensions.}
    \label{fig:toy_example_depths}
\end{figure*}
The image regions for which the blur behaves in the same way are completely defined by the depth sequence $\{D_l\}_{l=0}^{L-1}$. Without loss of generality, let us consider a one-dimensional camera movement whose maximum absolute displacement is denoted by $s_{\max}$. As introduced above, we consider $D_0$ the depth limit where pixels do not move, namely those pixels whose blur extent is less than one pixel (half a pixel for rounding issues, in practice). The pixels must satisfy
\begin{equation}
    \frac{\delta}{2} = \frac{s_{\max} F}{D_0} \enspace ,
\end{equation}
where $\delta$ denotes the pixel size. This means that $D_0=2\kappa$ with $\kappa=\frac{s_{\max} F}{\delta}$.

For the rest elements of the sequence, we take into account our definition of \textit{blur variation} presented in Sec.~\ref{sec:inplane_camera_motion_analysis}. The next element in the sequence is characterized as \textit{the depth that produces a blur variation of $n$ pixels~\footnotemark[1]{}}. 
In other words, the blur extent varies $n$ pixels from $\mathcal{R}_{l-1}$ to $\mathcal{R}_l$. This is expressed as:
\begin{equation}
\label{eq:depth_sequence_condition}
    \Delta T = \frac{s_{\max} F}{D_{l}} - \frac{s_{\max} F}{D_{l-1}} = n\delta \enspace .
\end{equation}
\footnotetext[1]{$\sigma$ and $n$ correspond to hyper-parameters in our blur formation model. Ablation studies on those can be found in the supplementary material.}
Similarly, by reorganizing the terms, we find an equation for the $l$-th element of the sequence.
It can be proven by induction that
\begin{equation}
\label{eq:depth_sequence}
    D_l = \frac{2\kappa}{2ln + 1} \enspace ,\hbox{ where } \kappa=\frac{s_{\max} F}{\delta} \enspace .
\end{equation}

To extend this methodology to 2D motion, we simply compute the component-wise sequences $D_{l(x)}$ and $D_{l(y)}$, which are obtained by replacing different values of $s_{\max}$ and $\delta$ for the $x$ and $y$ components of the movement. Then, the complete sequence $D_l$ is the sorted vector of the set union $\{D_{l(x)} \cup D_{l(y)}\}$.
Fig.~\ref{fig:toy_example_depths} exemplifies the discrete regions automatically obtained using the above procedure for a synthetically generated image, with $n=1$. Fig.~\ref{fig:toy_example_depths}(a) and (b) show the full depth and the 2D camera trajectory, respectively. Fig.~\ref{fig:toy_example_depths}(c) illustrates the depth sequences $D_{l(x)}$ and $D_{l(y)}$ computed by using the aforementioned procedure. Lastly, the set of regions $\{\mathcal{R}_l\}_{l=0}^{L-1}$ whose blur behaves similarly within each layer is shown in Fig. \ref{fig:toy_example_depths}(d). 
It is worth mentioning that the total number $L$ of regions is completely adaptive to the scene configuration. One only needs to compute the sequences until $D_{L-1(x)}$ and $D_{L-1(y)}$ cover the minimum depth. 
\subsection{Blur kernels synthesis}
\label{sec:kernels}
Having the time-dependent in-plane camera trajectory $s(t)$ and the depth map $D(\mathbf{p})$, the pixel-wise motion blur kernel is given by
\begin{equation}
    \label{eq:pixel-wise_true_kernel}
    \mathbf{k}(\mathbf{p}) = \hat{f}\Big(\Big\lfloor\frac{-s(t)F}{\delta D(\mathbf{p})}\Big\rfloor\Big) \enspace ,
\end{equation}
where $\lfloor \cdot \rfloor$ denotes the rounding operation and $\hat{f}$ is the operator that computes the EPDF for the discretized values in the argument. Instead of computing kernels $\mathbf{k}(\mathbf{p})$ at pixel level, we compute a smaller set $\{k_l\}_{l=0}^{L-1}$ where every kernel is paired with a region $\hat{\mathcal{R}}_l$.
The pixels in the region $\hat{\mathcal{R}}_l$ share the same motion blur kernel $k_l$:
\begin{equation}
    \label{eq:kernels_estimation}
    k_l = \hat{f}\Big(\Big\lfloor\frac{-s(t)F}{\delta D_l^*}\Big\rfloor\Big) \enspace ,
\end{equation}
where $D_l^*$ is the optimal depth value in the range $[ D_l, D_{l-1} ]$ that minimizes the mean-square error in $D(\mathbf{p})$ for $\mathbf{p} \in \mathcal{R}_l$. The depths $D(\mathbf{p})$ in $[D_l, D_{l-1}]$ follow a random variable $\zeta$ with PDF $f(\zeta)$ and the mean-square error is determined by
\begin{equation}
\label{eq:mse_depth_range}
    \int_{D_l}^{D_{l-1}} (\zeta - D_l^*)^2f(\zeta)d\zeta
    \enspace .
\end{equation}
It can be proven that the mean depth $\Bar{D}(\mathbf{p})$ in the range minimizes (\ref{eq:mse_depth_range}).
\section{Neural representations from blur}
Advances in implicit neural representations demonstrate that MLPs can learn the high-frequency details in 2D images~\cite{tancik_NIPS_2020_fourier,sitzmann_NIPS_2020_siren}.
In those works, a coordinate-based MLP $\Phi_\theta$ optimizes its parameters $\theta$ to fit a sharp image, \ie $\Phi_\theta: \mathbf{p} \mapsto \mathbf{x}$.
We propose a different approach where $\Phi_\theta$ fits the sharp image $\mathbf{x}$ from its corresponding blurred one $\mathbf{y}$, 
by embedding a blur function $b: \mathbf{x}\mapsto \mathbf{y}$ defined by either the PWB model~(\ref{eq:pixel-wise_kernel_model}) or our ICB model~(\ref{eq:image_compositing}).
This provides an alternative solution for deblurring from a single blurred image.
Since $b$ is differentiable, we can use gradient-descent methods to optimize $\theta$ with the following loss:
\begin{equation}
    \mathcal{L}= \sum_{\mathbf{p}}\norm{b(\Phi_\theta(\mathbf{p})) - \mathbf{y}(\mathbf{p})}_{2}^{2} + \lambda \norm{\nabla_\mathbf{p} \Phi_\theta(\mathbf{p})}_1^1 \enspace ,
\end{equation}
where $\lambda$ is a hyper-parameter that controls the smoothness of the gradients.
This method is similar to the approach presented by Ulyanov~\etal~\cite{ulyanov_2018_DIP}, with the exception that we utilize a coordinate-based MLP rather than a CNN for fitting $\mathbf{x}$.
In practice, we use the SIREN architecture~\cite{sitzmann_NIPS_2020_siren} for its ability to fit derivatives robustly.
\section{Experiments}
\subsection{Evaluation Datasets}
\paragraph{Synthetic dataset.} 
We constructed the Virtual Camera Motion Blur (VirtualCMB) dataset, 
where the ground-truth latent images and depth maps are rendered from the 3D scene models. 
We utilized the Unity engine~\cite{unity_2021} for rendering 3D scenes in HD resolution. The dataset was built using five high-quality scenes available in the unity asset store. 
The viewpoints were manually selected to represent a virtual snapshot camera and motion blur for the three studied cases:
\textbf{1)} {\em Macro Photography},
\textbf{2)} {\em Trucking Photography} and
\textbf{3)} {\em Standard Photography}. 
Table \ref{tab:captured_images} summarizes the number of images captured for each case. 
Macro and Trucking represent practical settings where depth contributes to blurring (see Sec.~\ref{sec:inplane_camera_motion_analysis}). 
Standard Photography is the typical setting where all scene objects are far from the camera and thus depth-agnostic models work well. 
In all cases, the camera was moved through pre-defined trajectories. For the Macro and Standard cases, we randomly selected six trajectories from the Kohler dataset~\cite{kohler_ECCV_2012}.
For the Trucking Photography cases, six linear trajectories with a constant speed in the $xy$ plane were generated.
The purpose is to mimic photography from a moving object (e.g., inside a car). To test our method beyond motion parallax,
also camera motions of \textit{pan-tilt} rotations and full 6-DoF camera motion were recorded.
Overall, 983 blurred images with corresponding latent sharp images and depth maps were rendered.
\begin{table}[t]
\centering
\caption{Summary of captured images: i) parallax motion, ii) with \textit{pan-tilt} rotations (w/ $xy$ rotations), and iii) 6-DoF.}
\resizebox{0.52\columnwidth}{!}{%
\begin{tabular}{lccccccccc}
\toprule
\multicolumn{1}{c}{\multirow{2}{*}{Scene}} & \multicolumn{3}{c}{Macro}                 & \multicolumn{3}{c}{Trucking}                 & \multicolumn{3}{c}{Standard}         \\ \cline{2-10} 
\multicolumn{1}{c}{}                       & i)           & ii)          & iii)         & i)           & ii)          & iii)         & i)         & ii)        & iii)         \\ \midrule
VikingVillage                              & 26           & 28           & 26           & 23           & 22           & 23           & -          & -          & 30           \\ 
IndustrialSet                              & -            & -            & -            & 60           & 60           & 60           & -          & -          & 30           \\ 
ModularCity                                & -            & -            & -            & 60           & 60           & 60           & -          & -          & 30           \\ 
ModernStudio                               & 58           & 55           & 55           & -            & -            & -            & -          & -          & 30           \\ 
LoftOffice                                 & 56           & 50           & 51           & -            & -            & -            & -          & -          & 30           \\ \midrule
\textbf{Total: 983}                             & \textbf{140} & \textbf{133} & \textbf{132} & \textbf{143} & \textbf{142} & \textbf{143} & \textbf{-} & \textbf{-} & \textbf{150} \\ \bottomrule
\end{tabular}
}
\label{tab:captured_images}
\end{table}
\paragraph{Real dataset.} 
For evaluation with real images, we used the iOS app introduced by Chugunov~\etal~\cite{chugunov_CVPR_2022} to capture synchronized RGB, LiDAR depth maps, and camera poses.
These videos match with the {\em Macro photography} case, where a static object is recorded by a hand-held smartphone camera.
As preprocessing, RGB frames are down-scaled to the depth map resolution (256x192), blurry frames are obtained by temporal average, while the sharp and depth correspondences are taken from the middle point in the camera trajectory.
Accordingly, we built the Real Camera Motion Blur (RealCMB) dataset, comprised of 58 pairs of blurry and sharp images, as well as depth and camera motion; from which 48 come from our own recordings and 10 are available in \cite{chugunov_CVPR_2022}.
%
\subsection{Model validation}
\paragraph{Parallax motion blur.}
Our ICB model
in Sec.~\ref{sec:blur_model} was particularly designed for the parallax motion of a camera. Thus, we first evaluate the model under in-plane camera motion in the VirtualCMB dataset.
For comparison, we considered the PWB model~(\ref{eq:pixel-wise_kernel_model}).
The standard image quality metrics: PSNR and SSIM, and the perceptual quality metric LPIPS~\cite{zhang_CVPR_2018}, are used for performance evaluation.

Parallax results are in the "Parallax" column of Table.~\ref{tab:model_val_VirtualCMB}.
In the terms of PSNR, SSIM, and LPIPS, the proposed ICB model outperforms the baseline PWB model in both of the main cases: Macro and Trucking, except for the LPIPS in the Trucking case.
Although the difference between SSIM and LPIPS is marginal in practice.
By nature, PWB cannot properly trace the generated blur over the depth discontinuities where occluded areas become visible during the motion.  
Conversely, our ICB model merges blur from different depth layers more effectively, resulting in a more realistic blur.
Fig.~\ref{fig:model_val}(a) and (b) illustrates this finding for the two types of blur, Macro, and Trucking. The error images reveal that the proposed model is more precise at the object edges.
\paragraph{Out-of-plane rotations and 6-DoF.}
\label{sec:results_beyond_parallax}
Non-uniform blur does not only come from motion parallax but also rotations.
Assuming a large focal length as in~\cite{whyte_IJCV_2012}, \textit{pan-tilt} rotations can be approximated by $xy$ translations that are non-depth dependent. Thus, we can compute a global uniform kernel which is added on top of the kernels in Sec.~\ref{sec:kernels}.
In particular, this approximation works well in narrow-lens devices.
This approach was adopted to handle motion camera blur beyond motion parallax, neglecting the effect of $z$ translation and \textit{roll} rotation. 
The results beyond parallax motion (xy-rotation and 6-DoF) in Table~\ref{tab:model_val_VirtualCMB} are similar to the parallax motion experiment. 
Consequently, the used approximation works well for the captured images in the VirtualCMB dataset.
\begin{table*}[!t]
\centering
\caption{Blur formation results in VirtualCMB.} 
\resizebox{\linewidth}{!}{
\begin{tabular}
{l|cc|cc|cc|cc|cc|cc|cc}
\toprule
\multirow{3}{*}{} & \multicolumn{4}{c|}{Parallax} & \multicolumn{4}{c|}{w/ $xy$ rotation} & \multicolumn{6}{c}{6 DoF} \\ 
 & \multicolumn{2}{c|}{Macro} & \multicolumn{2}{c|}{Trucking} & \multicolumn{2}{c|}{Macro} & \multicolumn{2}{c|}{Trucking} & \multicolumn{2}{c|}{Macro} & \multicolumn{2}{c|}{Trucking} & \multicolumn{2}{c}{Standard} \\ 
 & \multicolumn{1}{c}{Ours} & \multicolumn{1}{c|}{PWB} & \multicolumn{1}{c}{Ours} & \multicolumn{1}{c|}{PWB} & \multicolumn{1}{c}{Ours} & \multicolumn{1}{c|}{PWB} & \multicolumn{1}{c}{Ours} & \multicolumn{1}{c|}{PWB} & \multicolumn{1}{c}{Ours} & \multicolumn{1}{c|}{PWB} & \multicolumn{1}{c}{Ours} & \multicolumn{1}{c|}{PWB} & \multicolumn{1}{c}{Ours} & \multicolumn{1}{c}{PWB} \\ 
\midrule
$\uparrow$PSNR                     & {\bf 42.48} & 41.54 & {\bf 37.42} & 36.59 & {\bf 42.16} & 41.11 & {\bf 36.99} & 36.21 & {\bf 38.84} & 38.37 & {\bf 37.08} & 36.29 & {\bf 37.38} & 36.97 \\
$\uparrow$SSIM                     & {\bf 0.993} & 0.992 & {\bf 0.985} & 0.984 & {\bf 0.990} & 0.989 & {\bf 0.984} & 0.983 & {\bf 0.984} & 0.982 & {\bf 0.984} & 0.983 & {\bf 0.978} & 0.978 \\
$\downarrow$LPIPS ($\times 10^{-5}$) & {\bf 4.639} & 5.049 & 5.870 & {\bf 5.238} & {\bf 5.399} & 6.316 & 6.158 & {\bf 5.676} & {\bf 8.883} & 9.744 & 6.088 & {\bf 5.613} & {\bf 11.17} & 13.01 \\
\bottomrule
\end{tabular}
}
\label{tab:model_val_VirtualCMB}
\end{table*}
\begin{table}[!t]
\centering
\caption{Blur formation results for RealCMB (avg over 58 test images).}
\resizebox{0.35\linewidth}{!}{
\begin{tabular}{lccc}
\toprule
            &           $\uparrow$PSNR    &       $\uparrow$SSIM    &       $\downarrow$LPIPS ($\times 10^{-5}$) \\ \midrule
PWB         &           36.31  &       0.984   &        6.826   \\ 
Ours        &           {\bf 38.21}  &       {\bf 0.990}   &        {\bf 4.484}   \\ \bottomrule
\end{tabular}
}
\label{tab:model_val_HNDR}
\end{table}
\begin{table}[!t]
\centering
\caption{Results in terms of computational resources.}
\resizebox{0.55\linewidth}{!}{
\begin{tabular}{l|cc|cc}
\toprule
 & \multicolumn{2}{c|}{RealCMB} & \multicolumn{2}{c}{VirtualCMB} \\
 & Memory {[}MB{]} & Run time {[}s{]} & Memory {[}MB{]} & Run time {[}s{]} \\
 \midrule
PWB & 57.39 & {\bf 1.95} & 2799 & 13.41 \\
Ours & {\bf 1.74} &  2.78 & {\bf 48.97} & {\bf 7.27}\\
\bottomrule
\end{tabular}
}
\label{tab:comp_resources}
\end{table}
\begin{figure*}[h]
    \centering
    \begin{tabular}{c}
         \includegraphics[width=0.94\linewidth]{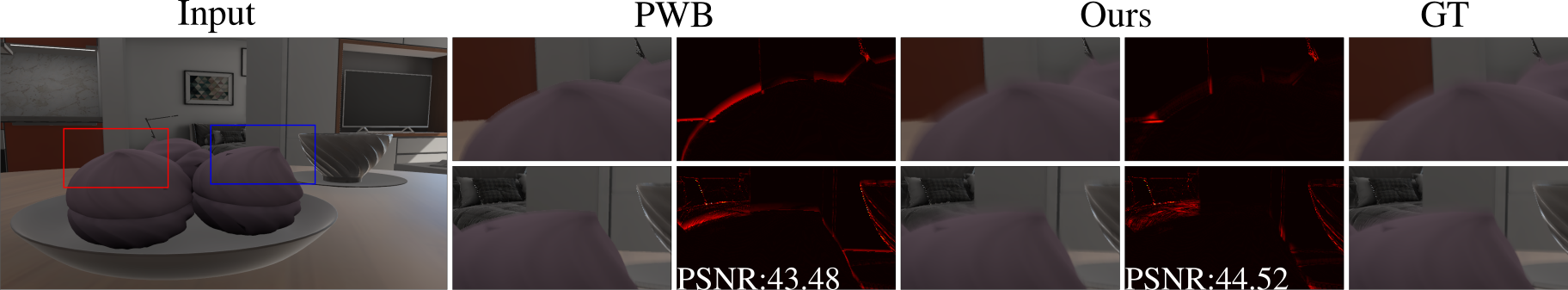} \\

         (a) \\
         \includegraphics[width=0.94\linewidth]{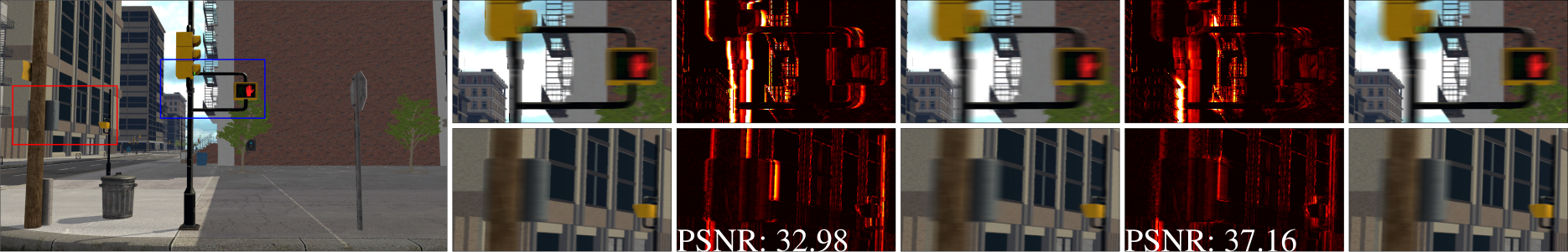} \\

         (b) \\
         \includegraphics[width=0.94\linewidth]{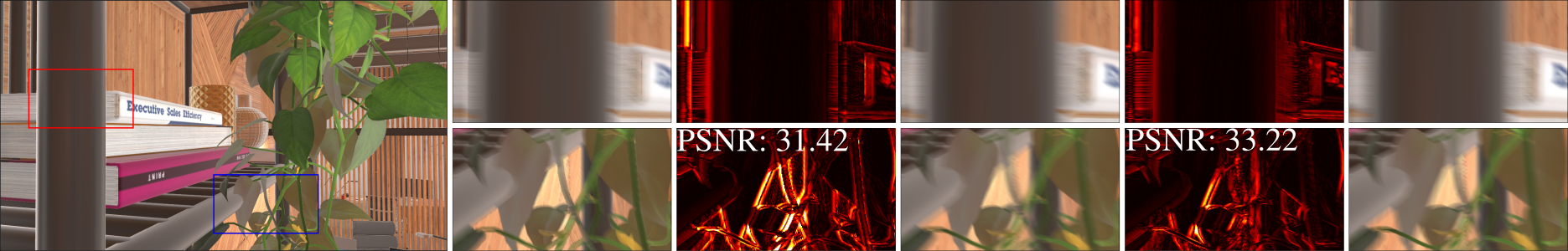} \\

         (c) \\
         \includegraphics[width=0.94\linewidth]{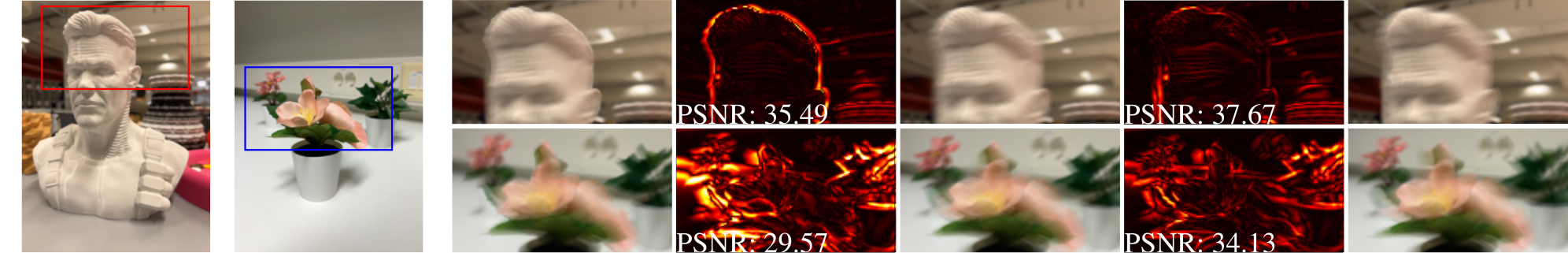} \\
         (d) \\
    \end{tabular}
    \caption{Examples of the blur formation results. (a) Macro, (b) Trucking (c) Macro with 6-DoF motion, and (d) Real images.}
    \label{fig:model_val}
\end{figure*}
%
\paragraph{Real images.} %
Surprisingly, the proposed ICB model performs clearly better on real 6-DoF motion in the RealCMB dataset (Table~\ref{tab:model_val_HNDR}) than in the previous experiment with synthetic data.
These results indicate that 1) parallax motion can be more dominating in real data than in our simulated cases and 2) our model is robust to depth and trajectory noise that appears in real data. 
Moreover, as the depth measurements are non-linearly quantized in ICB (see Sec.~\ref{sec:depth-depentent_regions}), there is no need for high-resolution depth maps.
Fig.~\ref{fig:model_val}(d) shows a visual example of the blur generation in the RealCMB dataset.
%
\paragraph{Computational resources.}
Table~\ref{tab:comp_resources} reports the averaged run time and memory size of our Pytorch implementations of PWB and ICB.
It turns out that the proposed ICB model is slightly slower in the RealCMB dataset but significantly faster in VirtualCMB.
Most importantly, ICB demonstrates considerably greater efficiency in terms of memory consumption, 
with reductions of $\times$32 and $\times$50 in the RealCMB and VirtualCMB datasets, respectively.
\subsection{Neural representations from blur}
\begin{table}[!t]
\centering
\caption{Comparison of sharp restoration results.}
\resizebox{0.9\linewidth}{!}{
\begin{tabular}{lcccccc}
\toprule
  & \multicolumn{3}{c}{VirtualCMB}                         &  \multicolumn{3}{c}{RealCMB}                           \\ 
                   & $\uparrow$PSNR & $\uparrow$SSIM & $\downarrow$LPIPS ($\times 10^{-4}$) & $\uparrow$PSNR & $\uparrow$SSIM & $\downarrow$LPIPS ($\times 10^{-4}$) \\ \midrule
SRN~\cite{tao_CVPR_2018}        &                   29.92 & 0.9135 & 7.423 &       28.26 & 0.9146 & 8.492     \\
SIUN~\cite{ye_IEEEAccess_2020}       &              29.75 & 0.9114 & {\bf 7.235} &       28.33 & 0.9139 & 7.231      \\
HINet~\cite{Chen_CVPR_2021_HINet}      &            29.86 & 0.9133 & 8.651 &       28.32 & 0.9133 & 7.627      \\
BANet~\cite{Tsai_TIP_2022_BANet}      &             29.77 & 0.9099 & 9.007 &       28.34 & 0.9140 & 8.241      \\
MIMO-UNet++~\cite{Cho_ICCV_2021_MIMO-UNet}&         28.79 & 0.8964 & 13.12 &       27.92 & 0.9106 & 10.33      \\
MPRNet~\cite{Zamir_CVPR_2021_MPRNet}     &          30.01 & 0.9146 & 8.062 &       28.79 & 0.9178 & 7.769     \\
MAXIM~\cite{Tu_CVPR_2022_MAXIM}      &              30.34 & {\bf 0.9186} & 7.418 &       28.89 & 0.9207 & 5.695      \\
Restormer~\cite{Zamir_CVPR_2022_Restormer}  &       {\bf 30.41} & 0.9174 & 7.318 &       29.56 & 0.9243 & 6.887      \\
\midrule
PWB + SIREN &                                       27.08 & 0.7975 & 37.25 &        30.61 & 0.9429 & 5.398      \\
Ours + SIREN &                                      27.14 & 0.8001 & 36.56 &        {\bf 31.92} & {\bf 0.9546} & {\bf 4.032}      \\

\bottomrule
\end{tabular}
}
\label{tab:deblurring_results}
\end{table}
Table~\ref{tab:deblurring_results} summarizes the results of the sharp implicit representations with ICB and PWB models.
In addition, we evaluated SOTA deep-deblurring methods.
For the task of learning implicit representations from a single blurry image, our ICB model produces superior reconstruction results compared to the PWB model.
Learned sharp representation using ICB does not match the performance of state-of-the-art deep deblurring methods on VirtualCMB, but it performs significantly better than others on RealCMB.
The variation in performance between the two datasets can be attributed to the difference in image resolution. 
The SIREN architecture utilized in the experiments may be better suited to handling low-resolution images, such as those in RealCMB.
Visual restoration examples are in Fig.~\ref{fig:deblurring_examples}. 
It is observed that the learned representation roughly restores the edges, but global noise remains in the VirtualCMB example.
On the contrary, an accurate sharp representation is obtained in the RealCMB case.
\begin{figure*}[!t]
    \centering
    \begin{tabular}{c}
         \includegraphics[width=0.94\linewidth]{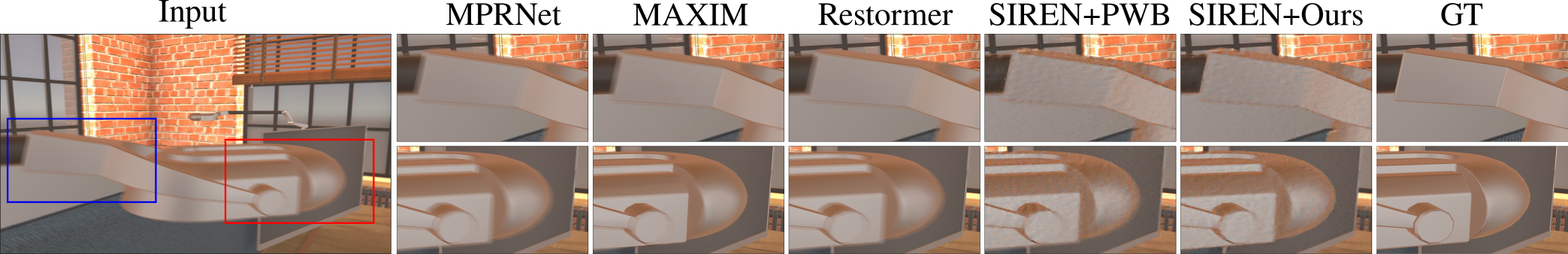} \\
         (a) \\
         \includegraphics[width=0.94\linewidth]{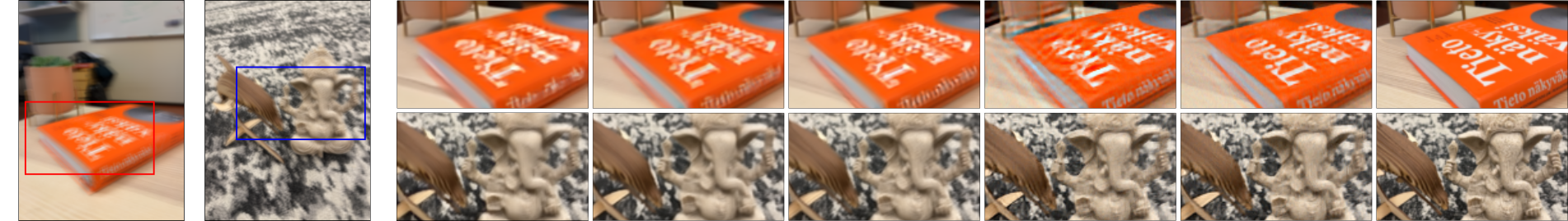} \\
         (b) \\
    \end{tabular}
    \caption{Examples of the deblurring results in (a) VirtualCMB, (b) RealCMB datasets.}
    \label{fig:deblurring_examples}
\end{figure*}
%
\section{Conclusion}
This work provides analytical and experimental results about the scene configurations in
which the scene depth affects the camera motion blur. 
In particular, we identified two types of scenes that appear in consumer photography: "Macro" and "Trucking".
Primarily, we presented an Image-Compositing Blur (ICB) model that efficiently and accurately describes the induced blur in those cases.
Experimental validation was performed in our introduced synthetic and real datasets.
Interestingly, we demonstrated the effectiveness of the ICB model to learn sharp neural representations from a single blurry image. 
Our findings and the new datasets help to develop better deblurring approaches.
\paragraph{Limitations.} 
Although our ICB model is derived for parallax motion, the model was found accurate enough under certain scene configurations, \eg,
Macro and Trucking photography.
Besides, the model is computationally efficient and robust against occlusions due to abrupt depth changes.
Regarding the deblurring task, our results are still far from being practical. 
In real scenarios, the depth maps and camera trajectories need to be estimated and that would need a careful study of the suitability of IMU-based odometry and depth sensors in the current hand-held devices. 
\clearpage
\section*{Acknowledgements}
This project was supported by a Huawei Technologies Oy (Finland) project. We also thank Jussi Kalliola for building the iOS app~\cite{chugunov_CVPR_2022} for data collection.
%
%
\bibliographystyle{splncs04}
\bibliography{refs}

\begin{thebibliography}{10}
\providecommand{\url}[1]{\texttt{#1}}
\providecommand{\urlprefix}{URL }
\providecommand{\doi}[1]{https://doi.org/#1}

\bibitem{akpinar_TIP_2021}
Akpinar, U., Sahin, E., Meem, M., Menon, R., Gotchev, A.: Learning wavefront
  coding for extended depth of field imaging. IEEE Transactions on Image
  Processing  \textbf{30},  3307--3320 (2021)

\bibitem{anwar_BMVC_2017}
Anwar, S., Hayder, Z., Porikli, F.: Depth estimation and blur removal from a
  single out-of-focus image. In: BMVC. vol.~1, p.~2 (2017)

\bibitem{chan_TIP_1998}
Chan, T.F., Wong, C.K.: Total variation blind deconvolution. IEEE transactions
  on Image Processing  \textbf{7}(3),  370--375 (1998)

\bibitem{Chen_CVPR_2021_HINet}
Chen, L., Lu, X., Zhang, J., Chu, X., Chen, C.: Hinet: Half instance
  normalization network for image restoration. In: Proceedings of the IEEE/CVF
  Conference on Computer Vision and Pattern Recognition. pp. 182--192 (2021)

\bibitem{Cho_ICCV_2021_MIMO-UNet}
Cho, S.J., Ji, S.W., Hong, J.P., Jung, S.W., Ko, S.J.: Rethinking
  coarse-to-fine approach in single image deblurring. In: Proceedings of the
  IEEE/CVF international conference on computer vision. pp. 4641--4650 (2021)

\bibitem{chugunov_CVPR_2022}
Chugunov, I., Zhang, Y., Xia, Z., Zhang, X., Chen, J., Heide, F.: The implicit
  values of a good hand shake: Handheld multi-frame neural depth refinement.
  In: Proceedings of the IEEE/CVF Conference on Computer Vision and Pattern
  Recognition. pp. 2852--2862 (2022)

\bibitem{fergus_SIGGRAPH_2006}
Fergus, R., Singh, B., Hertzmann, A., Roweis, S.T., Freeman, W.T.: Removing
  camera shake from a single photograph. In: ACM SIGGRAPH 2006 Papers. pp.
  787--794 (2006)

\bibitem{gupta_ECCV_2010}
Gupta, A., Joshi, N., Zitnick, C.L., Cohen, M., Curless, B.: Single image
  deblurring using motion density functions. In: European conference on
  computer vision. pp. 171--184. Springer (2010)

\bibitem{hasinoff_ICCV_2007}
Hasinoff, S.W., Kutulakos, K.N.: A layer-based restoration framework for
  variable-aperture photography. In: 2007 IEEE 11th International Conference on
  Computer Vision. pp.~1--8. IEEE (2007)

\bibitem{hirsch_ICCV_2011}
Hirsch, M., Schuler, C.J., Harmeling, S., Sch{\"o}lkopf, B.: Fast removal of
  non-uniform camera shake. In: 2011 International Conference on Computer
  Vision. pp. 463--470. IEEE (2011)

\bibitem{hu_CVPR_2014}
Hu, Z., Xu, L., Yang, M.H.: Joint depth estimation and camera shake removal
  from single blurry image. In: Proceedings of the IEEE Conference on Computer
  Vision and Pattern Recognition. pp. 2893--2900 (2014)

\bibitem{ikoma_ICCP_2021}
Ikoma, H., Nguyen, C.M., Metzler, C.A., Peng, Y., Wetzstein, G.: Depth from
  defocus with learned optics for imaging and occlusion-aware depth estimation.
  In: 2021 IEEE International Conference on Computational Photography (ICCP).
  pp. 1--12. IEEE (2021)

\bibitem{kohler_ECCV_2012}
K{\"o}hler, R., Hirsch, M., Mohler, B., Sch{\"o}lkopf, B., Harmeling, S.:
  Recording and playback of camera shake: Benchmarking blind deconvolution with
  a real-world database. In: European conference on computer vision. pp.
  27--40. Springer (2012)

\bibitem{krishnan_fergus_NIPS_2009}
Krishnan, D., Fergus, R.: Fast image deconvolution using hyper-laplacian
  priors. Advances in neural information processing systems  \textbf{22},
  1033--1041 (2009)

\bibitem{krishnan_CVPR_2011}
Krishnan, D., Tay, T., Fergus, R.: Blind deconvolution using a normalized
  sparsity measure. In: CVPR 2011. pp. 233--240. IEEE (2011)

\bibitem{kupyn_CVPR_2018}
Kupyn, O., Budzan, V., Mykhailych, M., Mishkin, D., Matas, J.: Deblurgan: Blind
  motion deblurring using conditional adversarial networks. In: Proceedings of
  the IEEE conference on computer vision and pattern recognition. pp.
  8183--8192 (2018)

\bibitem{kupyn_ICCV_2019}
Kupyn, O., Martyniuk, T., Wu, J., Wang, Z.: Deblurgan-v2: Deblurring
  (orders-of-magnitude) faster and better. In: Proceedings of the IEEE/CVF
  International Conference on Computer Vision. pp. 8878--8887 (2019)

\bibitem{levin_CVPR_2009}
Levin, A., Weiss, Y., Durand, F., Freeman, W.T.: Understanding and evaluating
  blind deconvolution algorithms. In: 2009 IEEE Conference on Computer Vision
  and Pattern Recognition. pp. 1964--1971. IEEE (2009)

\bibitem{levin_CVPR_2011}
Levin, A., Weiss, Y., Durand, F., Freeman, W.T.: Efficient marginal likelihood
  optimization in blind deconvolution. In: CVPR 2011. pp. 2657--2664. IEEE
  (2011)

\bibitem{li_ICCV_2019}
Li, L., Pan, J., Lai, W.S., Gao, C., Sang, N., Yang, M.H.: Blind image
  deblurring via deep discriminative priors. International journal of computer
  vision  \textbf{127}(8),  1025--1043 (2019)

\bibitem{li_TIP_2020}
Li, L., Pan, J., Lai, W.S., Gao, C., Sang, N., Yang, M.H.: Dynamic scene
  deblurring by depth guided model. IEEE Transactions on Image Processing
  \textbf{29},  5273--5288 (2020)

\bibitem{Mustaniemi_2019_WACV}
Mustaniemi, J., Kannala, J., Särkkä, S., Matas, J., Heikkilä, J.:
  Gyroscope-aided motion deblurring with deep networks. In: IEEE Winter
  Conference on Applications of Computer Vision (WACV) (January 2019)

\bibitem{nah_CVPR_2017}
Nah, S., Hyun~Kim, T., Mu~Lee, K.: Deep multi-scale convolutional neural
  network for dynamic scene deblurring. In: Proceedings of the IEEE conference
  on computer vision and pattern recognition. pp. 3883--3891 (2017)

\bibitem{pan_CVPR_2016}
Pan, J., Sun, D., Pfister, H., Yang, M.H.: Blind image deblurring using dark
  channel prior. In: Proceedings of the IEEE Conference on Computer Vision and
  Pattern Recognition. pp. 1628--1636 (2016)

\bibitem{pan_WACV_2019}
Pan, L., Dai, Y., Liu, M.: Single image deblurring and camera motion estimation
  with depth map. In: 2019 IEEE Winter Conference on Applications of Computer
  Vision (WACV). pp. 2116--2125. IEEE (2019)

\bibitem{park_ICCV_2017}
Park, H., Mu~Lee, K.: Joint estimation of camera pose, depth, deblurring, and
  super-resolution from a blurred image sequence. In: Proceedings of the IEEE
  International Conference on Computer Vision. pp. 4613--4621 (2017)

\bibitem{sheng_TCSVT_2019}
Sheng, B., Li, P., Fang, X., Tan, P., Wu, E.: Depth-aware motion deblurring
  using loopy belief propagation. IEEE Transactions on Circuits and Systems for
  Video Technology  \textbf{30}(4),  955--969 (2019)

\bibitem{sitzmann_NIPS_2020_siren}
Sitzmann, V., Martel, J., Bergman, A., Lindell, D., Wetzstein, G.: Implicit
  neural representations with periodic activation functions. Advances in Neural
  Information Processing Systems  \textbf{33},  7462--7473 (2020)

\bibitem{su_CVPR_2017}
Su, S., Delbracio, M., Wang, J., Sapiro, G., Heidrich, W., Wang, O.: Deep video
  deblurring for hand-held cameras. In: Proceedings of the IEEE Conference on
  Computer Vision and Pattern Recognition. pp. 1279--1288 (2017)

\bibitem{tai_PAMI_2010}
Tai, Y.W., Tan, P., Brown, M.S.: Richardson-lucy deblurring for scenes under a
  projective motion path. IEEE Transactions on Pattern Analysis and Machine
  Intelligence  \textbf{33}(8),  1603--1618 (2010)

\bibitem{tancik_NIPS_2020_fourier}
Tancik, M., Srinivasan, P., Mildenhall, B., Fridovich-Keil, S., Raghavan, N.,
  Singhal, U., Ramamoorthi, R., Barron, J., Ng, R.: Fourier features let
  networks learn high frequency functions in low dimensional domains. Advances
  in Neural Information Processing Systems  \textbf{33},  7537--7547 (2020)

\bibitem{tao_CVPR_2018}
Tao, X., Gao, H., Shen, X., Wang, J., Jia, J.: Scale-recurrent network for deep
  image deblurring. In: Proceedings of the IEEE Conference on Computer Vision
  and Pattern Recognition. pp. 8174--8182 (2018)

\bibitem{unity_2021}
Technologies, U.: Unity real-time development platform,
  \url{https://unity.com/}

\bibitem{Tsai_TIP_2022_BANet}
Tsai, F.J., Peng, Y.T., Tsai, C.C., Lin, Y.Y., Lin, C.W.: Banet: A blur-aware
  attention network for dynamic scene deblurring. IEEE Transactions on Image
  Processing  \textbf{31},  6789--6799 (2022)

\bibitem{Tu_CVPR_2022_MAXIM}
Tu, Z., Talebi, H., Zhang, H., Yang, F., Milanfar, P., Bovik, A., Li, Y.:
  Maxim: Multi-axis mlp for image processing. In: Proceedings of the IEEE/CVF
  Conference on Computer Vision and Pattern Recognition. pp. 5769--5780 (2022)

\bibitem{ulyanov_2018_DIP}
Ulyanov, D., Vedaldi, A., Lempitsky, V.: Deep image prior. In: Proceedings of
  the IEEE conference on computer vision and pattern recognition. pp.
  9446--9454 (2018)

\bibitem{vaswani_nips_2017_transformer}
Vaswani, A., Shazeer, N., Parmar, N., Uszkoreit, J., Jones, L., Gomez, A.N.,
  Kaiser, {\L}., Polosukhin, I.: Attention is all you need. Advances in neural
  information processing systems  \textbf{30} (2017)

\bibitem{whyte_IJCV_2012}
Whyte, O., Sivic, J., Zisserman, A., Ponce, J.: Non-uniform deblurring for
  shaken images. International journal of computer vision  \textbf{98}(2),
  168--186 (2012)

\bibitem{xu_ECCV_2010}
Xu, L., Jia, J.: Two-phase kernel estimation for robust motion deblurring. In:
  European conference on computer vision. pp. 157--170. Springer (2010)

\bibitem{xu_ICCP_2012}
Xu, L., Jia, J.: Depth-aware motion deblurring. In: 2012 IEEE International
  Conference on Computational Photography (ICCP). pp.~1--8. IEEE (2012)

\bibitem{xu_CVPR_2013}
Xu, L., Zheng, S., Jia, J.: Unnatural l0 sparse representation for natural
  image deblurring. In: Proceedings of the IEEE conference on computer vision
  and pattern recognition. pp. 1107--1114 (2013)

\bibitem{ye_IEEEAccess_2020}
Ye, M., Lyu, D., Chen, G.: Scale-iterative upscaling network for image
  deblurring. IEEE Access  \textbf{8},  18316--18325 (2020)

\bibitem{Zamir_CVPR_2022_Restormer}
Zamir, S.W., Arora, A., Khan, S., Hayat, M., Khan, F.S., Yang, M.H.: Restormer:
  Efficient transformer for high-resolution image restoration. In: Proceedings
  of the IEEE/CVF Conference on Computer Vision and Pattern Recognition. pp.
  5728--5739 (2022)

\bibitem{Zamir_CVPR_2021_MPRNet}
Zamir, S.W., Arora, A., Khan, S., Hayat, M., Khan, F.S., Yang, M.H., Shao, L.:
  Multi-stage progressive image restoration. In: Proceedings of the IEEE/CVF
  conference on computer vision and pattern recognition. pp. 14821--14831
  (2021)

\bibitem{zhang_CVPR_2018}
Zhang, R., Isola, P., Efros, A.A., Shechtman, E., Wang, O.: The unreasonable
  effectiveness of deep features as a perceptual metric. In: Proceedings of the
  IEEE conference on computer vision and pattern recognition. pp. 586--595
  (2018)

\bibitem{zhang_JVCIR_2016}
Zhang, X., Wang, R., Jiang, X., Wang, W., Gao, W.: Spatially variant defocus
  blur map estimation and deblurring from a single image. Journal of Visual
  Communication and Image Representation  \textbf{35},  257--264 (2016)

\bibitem{zhou_CVPR_2019}
Zhou, S., Zhang, J., Zuo, W., Xie, H., Pan, J., Ren, J.S.: Davanet: Stereo
  deblurring with view aggregation. In: Proceedings of the IEEE/CVF Conference
  on Computer Vision and Pattern Recognition. pp. 10996--11005 (2019)

\end{thebibliography}

\clearpage
\appendix
\pagenumbering{arabic}
\renewcommand*{\thepage}{A\arabic{page}}
\counterwithin{figure}{subsection}
\counterwithin{table}{subsection}
\counterwithin{equation}{subsection}

\section{Image Compositing Blur (ICB) model details}
\subsection{$D_l$ sequence}
In the main paper, we introduced the $D_l$ sequence that is used to determine the depth-dependent layers with similar blur behavior.
Here, we present a detailed derivation.

Eq.~\ref{eq:depth_sequence_condition} can be reorganized to obtain a recursive expression for $D_l$:
\begin{equation}
    \label{eq:recursion_D_l}
    D_l = \frac{\kappa D_{l-1}}{n D_{l-1} + \kappa} ,\hbox{ where } \kappa=\frac{s_{\max} F}{\delta} \enspace .
\end{equation}

\begin{prop}
Given $D_l$ as stated in Eq.~\ref{eq:recursion_D_l}, and $D_0 = 2\kappa$, the explicit expression for $D_l$ is:
\[ D_l = \frac{2\kappa}{2l n + 1} \]
\end{prop}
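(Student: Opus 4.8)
The plan is to prove the closed form $D_l = \frac{2\kappa}{2ln+1}$ by induction on $l$, using the recursion in Eq.~\ref{eq:recursion_D_l} as the inductive step. The base case $l=0$ is immediate: substituting $l=0$ into the claimed formula gives $D_0 = \frac{2\kappa}{0+1} = 2\kappa$, which matches the stated initial condition. For the inductive step, I would assume $D_{l-1} = \frac{2\kappa}{2(l-1)n+1}$ and substitute this into the recursion $D_l = \frac{\kappa D_{l-1}}{nD_{l-1}+\kappa}$.

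The core of the argument is then a routine algebraic simplification. Plugging in, the numerator becomes $\kappa \cdot \frac{2\kappa}{2(l-1)n+1} = \frac{2\kappa^2}{2(l-1)n+1}$, and the denominator becomes $n\cdot\frac{2\kappa}{2(l-1)n+1} + \kappa = \frac{2n\kappa + \kappa(2(l-1)n+1)}{2(l-1)n+1} = \frac{\kappa(2n + 2(l-1)n + 1)}{2(l-1)n+1} = \frac{\kappa(2ln+1)}{2(l-1)n+1}$. Dividing numerator by denominator, the common factor $\frac{1}{2(l-1)n+1}$ cancels, as does one factor of $\kappa$, leaving $D_l = \frac{2\kappa}{2ln+1}$, which is exactly the claim for index $l$. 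This closes the induction.

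There is essentially no hard part here: the only thing to be mildly careful about is the bookkeeping in expanding $2n + 2(l-1)n + 1$ to $2ln+1$, and noting that $\kappa > 0$ (hence $2(l-1)n+1 > 0$ for the relevant range of $l$, so no division by zero occurs). For completeness, I would also briefly remark on where the recursion itself comes from — namely, Eq.~\ref{eq:depth_sequence_condition}, $\frac{s_{\max}F}{D_l} - \frac{s_{\max}F}{D_{l-1}} = n\delta$, rearranges to $\frac{1}{D_l} = \frac{1}{D_{l-1}} + \frac{n}{\kappa}$, which telescopes directly to $\frac{1}{D_l} = \frac{1}{D_0} + \frac{ln}{\kappa} = \frac{1}{2\kappa} + \frac{ln}{\kappa} = \frac{2ln+1}{2\kappa}$, giving an alternative one-line derivation that avoids induction altogether. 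I would present the induction as the main proof (matching the paper's phrasing "It can be proven by induction") and optionally mention the telescoping shortcut as a remark.
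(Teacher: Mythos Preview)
Your proposal is correct and follows essentially the same induction argument as the paper's proof (the paper takes $l=1$ as the base case and steps from $l'$ to $l'+1$, while you take $l=0$ and step from $l-1$ to $l$, but the algebra is identical). The telescoping remark you add at the end is a nice bonus that the paper does not include.
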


\begin{proof}
For $l=1$, we have:
\[ D_1 = \frac{\kappa D_{0}}{n D_{0} + \kappa} = \frac{2 \kappa^2}{2n\kappa + \kappa} = \frac{2 \kappa}{2n + 1} \]
It is then clear that the explicit expression holds for $l=1$.

Assuming that the statement holds for some $l'>0$, \ie $D_{l'} = \frac{2\kappa}{2l' n + 1}$, we must show that: 
\[ D_{l'+1} = \frac{2\kappa}{2(l'+1) n + 1} \]

Using Eq.~\ref{eq:recursion_D_l}:\\

   \[ D_{l'+1} = \frac{\kappa D_{l'}}{n D_{l'} + \kappa} = \frac{\kappa \big(\frac{2\kappa}{2l' n + 1}\big)}{n \big(\frac{2\kappa}{2l' n + 1}\big) + \kappa } = \frac{2\kappa }{2n + 2n l' + 1 } = \frac{2\kappa}{2(l'+1) n + 1} \]

\end{proof}

\subsection{Ablation}
Our ICB model for parallax motion includes two hyper-parameters:
\textbf{1)} $n$ controls the level of discretization in the depth map, 
the smaller the values for $n$ the more depth layers are obtained; and \textbf{2)}
$\sigma$ has an effect on the smoothness of each alpha matte $\mathcal{A}_l$.

Table~\ref{tab:ablation} reports the ablation results for 12 images within the Macro and Trucking scenes in the VirtualCMB dataset.
We can observe that $\sigma$ does not play a significant role in the performance. In contrast, $n$ balances accuracy with run time. Smaller $n$ leads to higher accuracy but in slower run time.
In consequence, we set $n=1$ and $\sigma= 4.0$ to get higher quality performance throughout our experimentation.

\begin{table}[!h]
\centering
\caption{Ablation results.}
\resizebox{0.8\linewidth}{!}{
\begin{tabular}{cc|cccc|cccc}
\toprule
  & & \multicolumn{4}{c|}{Macro}                         &  \multicolumn{4}{c}{Trucking}                           \\ 
    $\sigma$   &  $n$   & PSNR & SSIM & LPIPS ($\times 10^{-5}$) & Time [s] & PSNR & SSIM & LPIPS ($\times 10^{-5}$) & Time [s] \\ \midrule
0.5   &1.0        &40.799  &0.99142  &3.4104  &3.3310   &36.905  &0.98416  &4.7988  &2.9216  \\
      &2.0        &40.626  &0.99097  &3.2748  &2.6664   &36.725  &0.98337  &4.8637  &2.3630  \\
      &3.0        &40.393  &0.99053  &3.3637  &2.4532   &36.333  &0.98190  &6.6962  &2.1599  \\
1.0   &1.0        &40.868  &0.99151  &3.3610  &3.4100   &36.971  &0.98429  &4.7715  &2.8925  \\
      &2.0        &40.691  &0.99106  &3.2451  &2.6764   &36.786  &0.98350  &4.8065  &2.3663  \\
      &3.0        &40.463  &0.99062  &3.3218  &2.7093   &36.381  &0.98203  &6.6180  &2.2219  \\
1.5   &1.0        &40.892  &0.99154  &3.3590  &4.0521   &37.009  &0.98438  &4.7872  &2.8227  \\
      &2.0        &40.712  &0.99110  &3.2493  &2.5828   &36.824  &0.98359  &4.7764  &2.3238  \\
      &3.0        &40.492  &0.99067  &3.3150  &2.4378   &36.409  &0.98211  &6.6034  &2.1928  \\
2.0   &1.0        &40.901  &0.99156  &3.3733  &3.2586   &37.037  &0.98444  &4.8103  &2.8838  \\
      &2.0        &40.720  &0.99112  &3.2624  &2.9163   &36.854  &0.98367  &4.7623  &2.4189  \\
      &3.0        &40.507  &0.99070  &3.3235  &2.5660   &36.433  &0.98218  &6.6126  &2.5661  \\
2.5   &1.0        &40.903  &0.99157  &3.3926  &3.8899   &37.062  &0.98450  &4.8365  &2.8771  \\
      &2.0        &40.723  &0.99114  &3.2786  &2.6463   &36.881  &0.98373  &4.7593  &2.4833  \\
      &3.0        &40.515  &0.99072  &3.3375  &2.3812   &3.6456  &0.98225  &6.6283  &2.2431  \\
3.0   &1.0        &40.902  &0.99157  &3.4123  &3.4124   &37.085  &0.98455  &4.8687  &2.8062  \\
      &2.0        &40.724  &0.99115  &3.2951  &2.5933   &36.907  &0.98378  &4.7640  &2.3603  \\
      &3.0        &40.520  &0.99073  &3.3526  &2.2284   &36.477  &0.98231  &6.6487  &2.0337  \\
3.5   &1.0        &40.900  &0.99158  &3.4297  &3.7939   &37.105  &0.98459  &4.9037  &2.8090  \\
      &2.0        &40.722  &0.99116  &3.3101  &2.6797   &36.929  &0.98382  &4.7729  &2.3857  \\
      &3.0        &40.523  &0.99074  &3.3669  &2.4121   &36.495  &0.98235  &6.6714  &2.2891  \\
4.0   &1.0        &40.896  &0.99158  &3.4447  &3.3234   &37.122  &0.98462  &4.9384  &2.8267  \\
      &2.0        &40.720  &0.99116  &3.3233  &2.8096   &36.946  &0.98386  &4.7852  &2.4440  \\
      &3.0        &40.524  &0.99075  &3.3796  &2.4356   &36.509  &0.98239  &6.6945  &2.2058  \\

\bottomrule
\end{tabular}
}

\label{tab:ablation}
\end{table}

\section{Details for optimizing neural representations from blur}
The SIREN architecture \cite{sitzmann_NIPS_2020_siren} used for the neural representation consists of 4 hidden layers, each one with 192 nodes, and one out-most linear layer. We use Adam optimizer with 400 iterations. The learning rate ({\tt lr}) and the gradient weight $\lambda$ are set to $5\times 10^{-4}$ and $8\times 10^{-6}$, respectively. Likewise, we employ gradient clipping and the cosine annealing scheduler with minimum {\tt lr} of $5\times 10^{-6}$.

\section{Additional results}

More visual comparisons for our blur formation model are shown in Fig.~\ref{fig:blur_VirtualCMB}, and Fig.~\ref{fig:blur_RealCMB}. Additional deblurring results are illustrated in Fig.~\ref{fig:deblur_VirtualCMB} and Fig.~\ref{fig:deblur_RealCMB}.

\begin{figure*}[!h]
    \centering
    \begin{tabular}{c}
         \includegraphics[width=\linewidth]{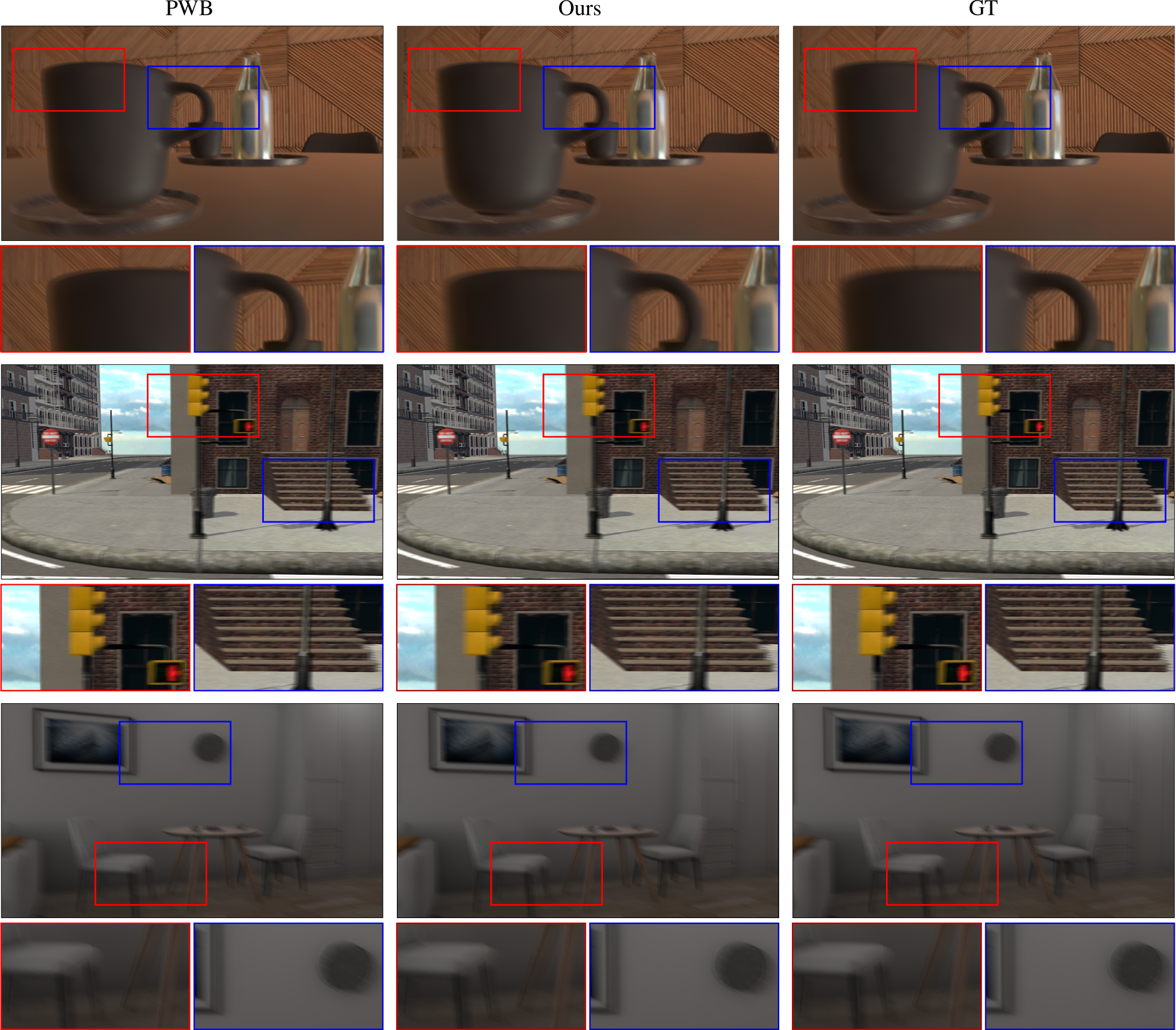} \\

    \end{tabular}
    \caption{Visual results of blur formation in VirtualCMB: Macro, Trucking, and Standard scenes.}
    \label{fig:blur_VirtualCMB}
\end{figure*}

\begin{figure*}[!h]
    \centering
    \begin{tabular}{c}
         \includegraphics[width=\linewidth]{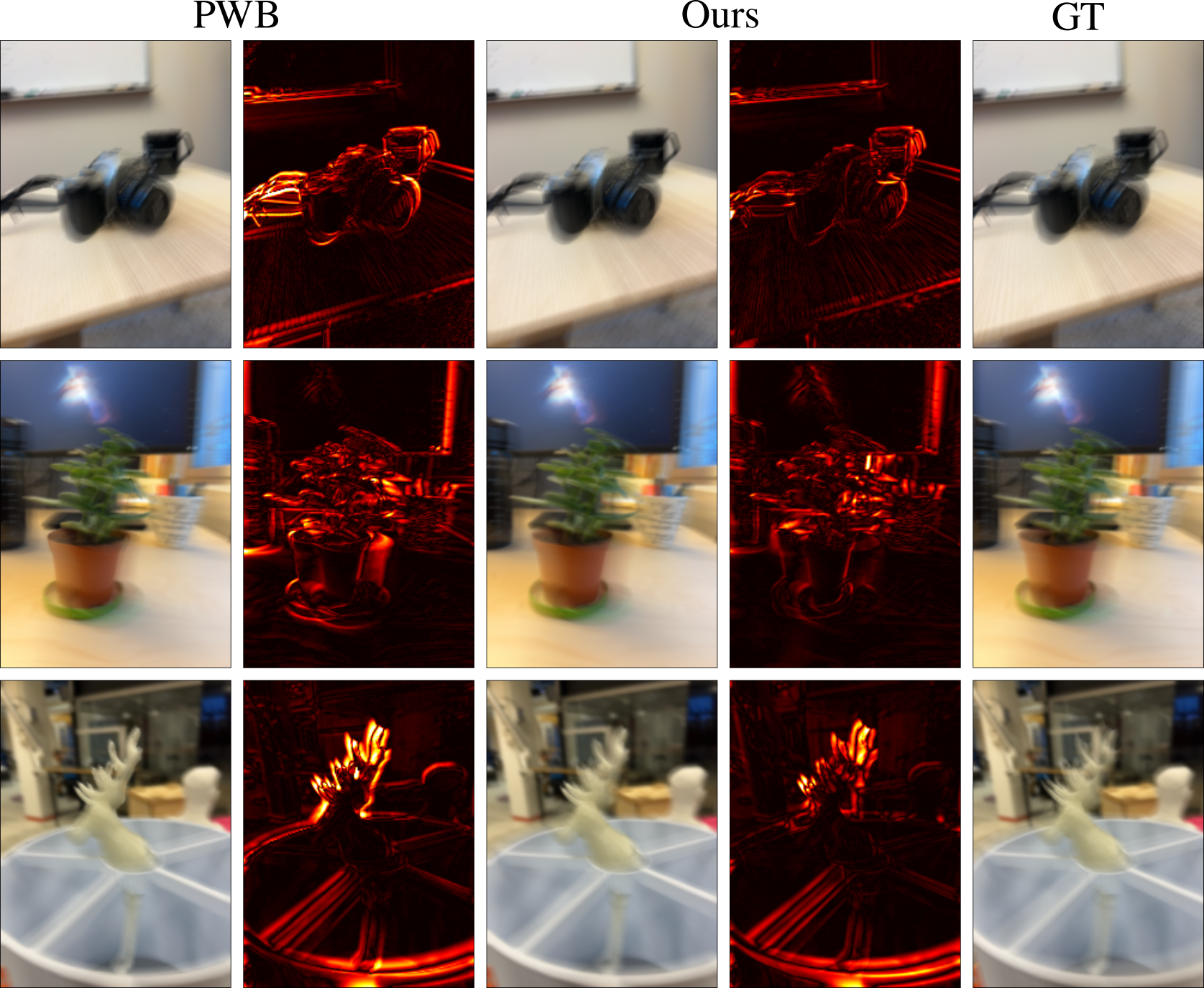} \\

    \end{tabular}
    \caption{Visual results of blur formation in RealCMB.}
    \label{fig:blur_RealCMB}
\end{figure*}
\clearpage

\begin{figure*}[!h]
    \centering
    \begin{tabular}{c}
         \includegraphics[width=\linewidth]{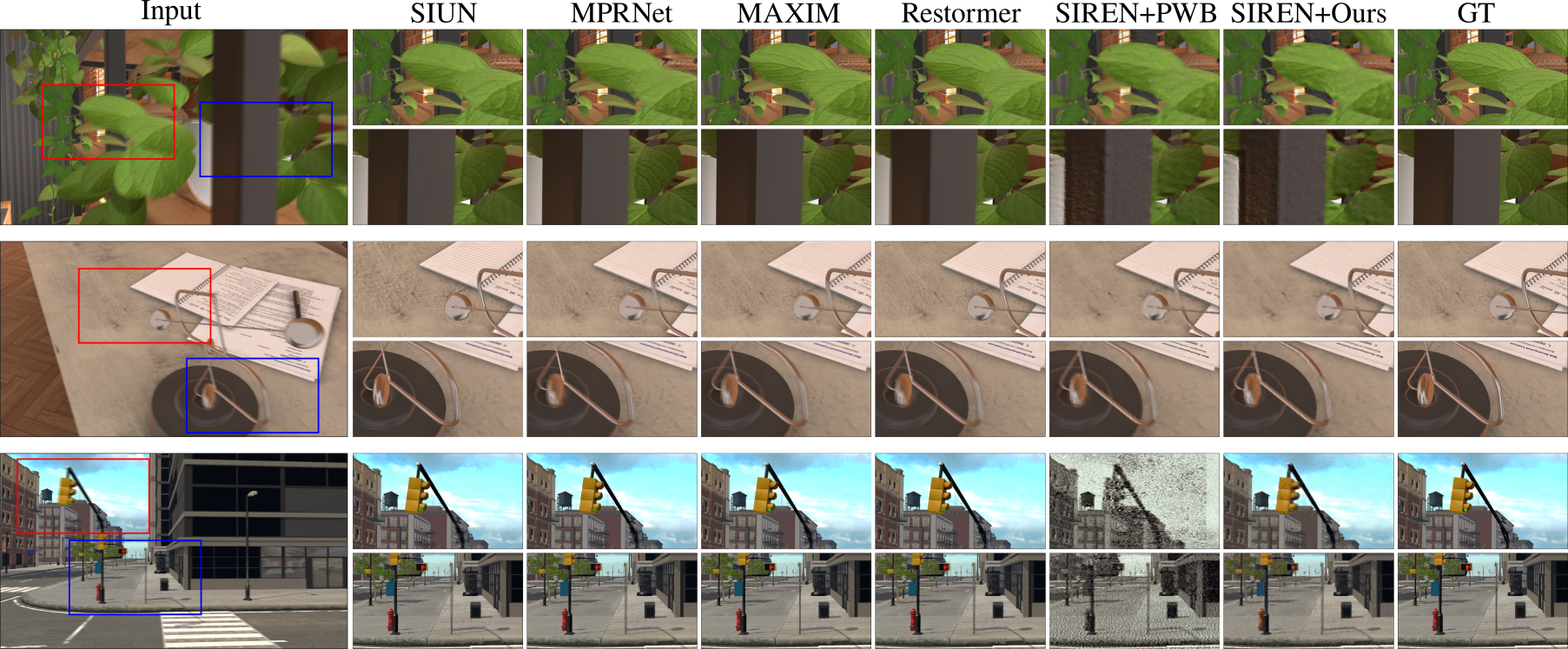} \\

    \end{tabular}
    \caption{Examples of sharp restoration in VirtualCMB.}
    \label{fig:deblur_VirtualCMB}
\end{figure*}

\begin{figure*}[!h]
    \centering
    \begin{tabular}{c}
         \includegraphics[width=\linewidth]{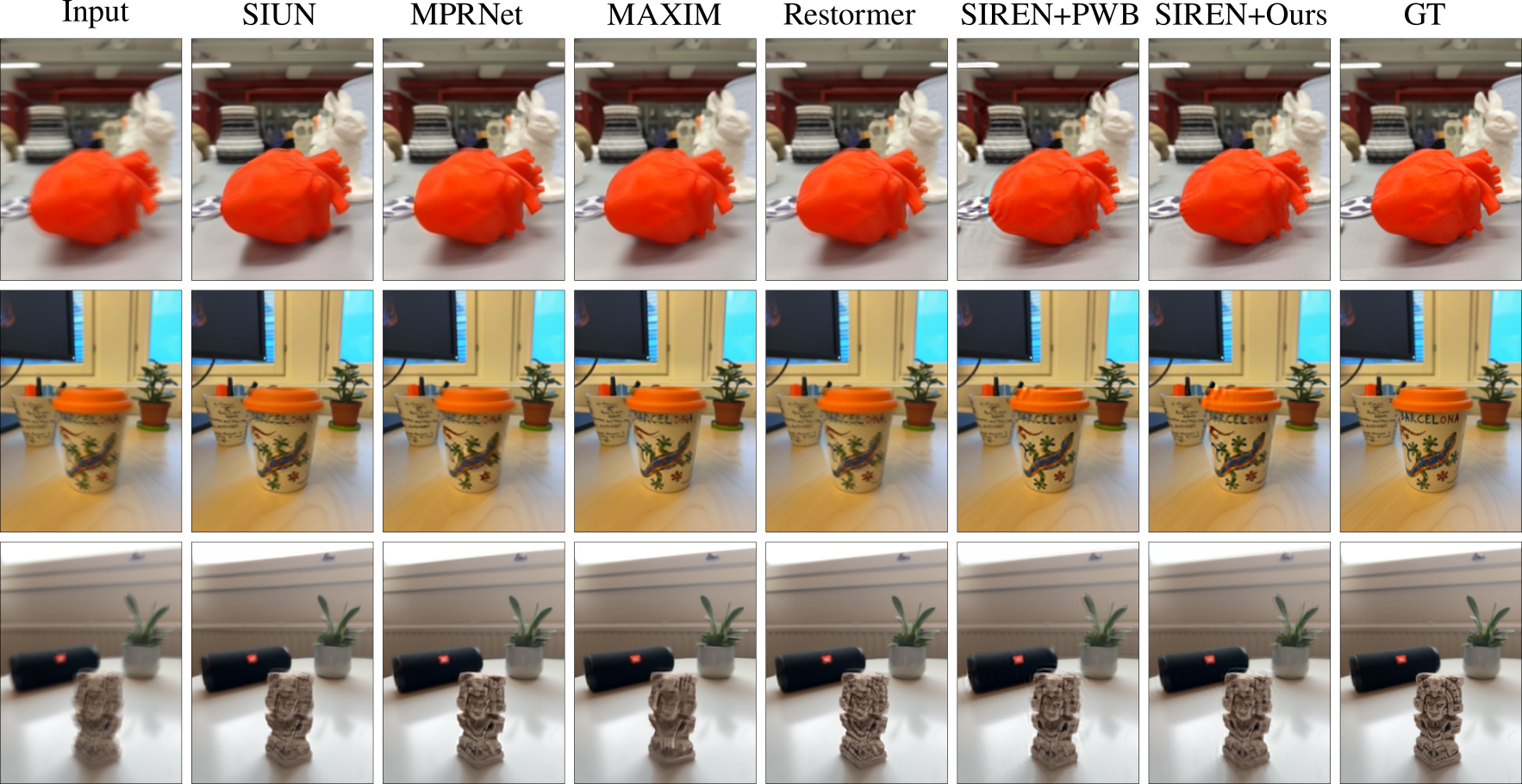} \\

    \end{tabular}
    \caption{Examples of sharp restoration in RealCMB.}
    \label{fig:deblur_RealCMB}
\end{figure*}

\end{document}